\documentclass[lettersize,journal]{IEEEtran}
\usepackage{algorithm}
\usepackage{algorithmic}

\usepackage{amsthm}
\usepackage{amsmath,amssymb,mathtools}
\usepackage{makecell}
\usepackage{booktabs}
\usepackage{footnote}
\usepackage{cite}
\usepackage[switch]{lineno}  %

\newtheorem{theorem}{Theorem}

\newtheorem{lemma}{Lemma}
\newtheorem{corollary}{Corollary}
\newtheorem{assumption}{Assumption}

\newtheorem{remark}{Remark}

\usepackage{ifpdf}
\usepackage[flushleft]{threeparttable}
\usepackage{makecell}
\usepackage{multirow}
\usepackage{bbding}
\usepackage{graphicx}
\usepackage{dsfont}
\newcommand{\tool}{\texttt{SL-SAM}}
\usepackage{pifont}
\newcommand{\myparagraph}[1]{\textbf{#1}\hspace{0.3em}}
\usepackage[most]{tcolorbox}
\usepackage[table]{xcolor}

\ifCLASSINFOpdf
\else
\fi
\begin{document}
%
% paper title
% Titles are generally capitalized except for words such as a, an, and, as,
% at, but, by, for, in, nor, of, on, or, the, to and up, which are usually
% not capitalized unless they are the first or last word of the title.
% Linebreaks \\ can be used within to get better formatting as desired.
% Do not put math or special symbols in the title.
\title{Sparse Layer Sharpness-Aware Minimization for Efficient Fine-Tuning}
%
%
% author names and IEEE memberships
% note positions of commas and nonbreaking spaces ( ~ ) LaTeX will not break
% a structure at a ~ so this keeps an author's name from being broken across
% two lines.
% use \thanks{} to gain access to the first footnote area
% a separate \thanks must be used for each paragraph as LaTeX2e's \thanks
% was not built to handle multiple paragraphs
%

\author{Yifei~Cheng, Xianglin~Yang, Guoxia~Wang, Chao~Huang, Fei~Ma, Dianhai~Yu, Xiaochun Cao, Li Shen

\IEEEcompsocitemizethanks{
\IEEEcompsocthanksitem Y. Cheng, C. Huang, X. Cao and L. Shen are with the School of Cyber Science and Technology, Sun Yat-sen University, Shenzhen Campus. E-mail: \{yfcheng.ifc, mathshenli\}@gmail.com, \{huangch253, caoxiaochun\}@mail.sysu.edu.cn. \protect
\IEEEcompsocthanksitem X. Yang is with the School of Computing, National University of Singapore. E-mail: yxlyzxl@gmail.com. \protect
\IEEEcompsocthanksitem G. Wang and D. Yu are with the Baidu Inc. E-mail: \{wangguoxia, yudianhai\}@baidu.com. \protect
\IEEEcompsocthanksitem F. Ma is with the Guangdong Laboratory of Artificial Intelligence and Digital Economy (SZ). E-mail: mafei@gml.ac.cn. \protect
\IEEEcompsocthanksitem Corresponding author: Li Shen.}% <-this % stops a space
%\thanks{Manuscript received April 19, 2005; revised August 26, 2015.}
}

% note the % following the last \IEEEmembership and also \thanks - 
% these prevent an unwanted space from occurring between the last author name
% and the end of the author line. i.e., if you had this:
% 
% \author{....lastname \thanks{...} \thanks{...} }
%                     ^------------^------------^----Do not want these spaces!
%
% a space would be appended to the last name and could cause every name on that
% line to be shifted left slightly. This is one of those "LaTeX things". For
% instance, "\textbf{A} \textbf{B}" will typeset as "A B" not "AB". To get
% "AB" then you have to do: "\textbf{A}\textbf{B}"
% \thanks is no different in this regard, so shield the last } of each \thanks
% that ends a line with a % and do not let a space in before the next \thanks.
% Spaces after \IEEEmembership other than the last one are OK (and needed) as
% you are supposed to have spaces between the names. For what it is worth,
% this is a minor point as most people would not even notice if the said evil
% space somehow managed to creep in.

% The paper headers
\markboth{Journal of \LaTeX\ Class Files,~Vol.~14, No.~8, August~2015}%
{Shell \MakeLowercase{\textit{et al.}}: Bare Advanced Demo of IEEEtran.cls for IEEE Computer Society Journals}
% The only time the second header will appear is for the odd numbered pages
% after the title page when using the twoside option.
% 
% *** Note that you probably will NOT want to include the author's ***
% *** name in the headers of peer review papers.                   ***
% You can use \ifCLASSOPTIONpeerreview for conditional compilation here if
% you desire.

% The publisher's ID mark at the bottom of the page is less important with
% Computer Society journal papers as those publications place the marks
% outside of the main text columns and, therefore, unlike regular IEEE
% journals, the available text space is not reduced by their presence.
% If you want to put a publisher's ID mark on the page you can do it like
% this:
%\IEEEpubid{0000--0000/00\$00.00~\copyright~2015 IEEE}
% or like this to get the Computer Society new two part style.
%\IEEEpubid{\makebox[\columnwidth]{\hfill 0000--0000/00/\$00.00~\copyright~2015 IEEE}%
%\hspace{\columnsep}\makebox[\columnwidth]{Published by the IEEE Computer Society\hfill}}
% Remember, if you use this you must call \IEEEpubidadjcol in the second
% column for its text to clear the IEEEpubid mark (Computer Society journal
% papers don't need this extra clearance.)

% use for special paper notices
%\IEEEspecialpapernotice{(Invited Paper)}

% for Computer Society papers, we must declare the abstract and index terms
% PRIOR to the title within the \IEEEtitleabstractindextext IEEEtran
% command as these need to go into the title area created by \maketitle.
% As a general rule, do not put math, special symbols or citations
% in the abstract or keywords.
\IEEEtitleabstractindextext{%

\begin{abstract}
Sharpness-aware minimization (SAM) seeks the minima with a flat loss landscape to improve the generalization performance in machine learning tasks, including fine-tuning. However, its extra parameter perturbation step doubles the computation cost, which becomes the bottleneck of SAM in the practical implementation. In this work, we propose an approach \tool\ to break this bottleneck by introducing the sparse technique to layers. Our key innovation is to frame the dynamic selection of layers for both the gradient ascent (perturbation) and descent (update) steps as a multi-armed bandit problem. At the beginning of each iteration, \tool\ samples a part of the layers of the model according to the gradient norm to participate in the backpropagation of the following parameter perturbation and update steps, thereby reducing the computation complexity. We then provide the analysis to guarantee the convergence of \tool. In the experiments of fine-tuning models in several tasks, \tool\ achieves the performances comparable to the state-of-the-art baselines, including a \#1 rank on LLM fine-tuning. Meanwhile, \tool\ significantly reduces the ratio of active parameters in backpropagation compared to vanilla SAM (\tool\ activates 47\%, 22\% and 21\% parameters on the vision, moderate and large language model respectively while vanilla SAM always activates 100\%), verifying the efficiency of our proposed algorithm.
\end{abstract}

% Note that keywords are not normally used for peerreview papers.
\begin{IEEEkeywords}
Stochastic optimization, sharpness-aware minimization, efficient fine-tuning.
\end{IEEEkeywords}}

% make the title area
\maketitle

% To allow for easy dual compilation without having to reenter the
% abstract/keywords data, the \IEEEtitleabstractindextext text will
% not be used in maketitle, but will appear (i.e., to be "transported")
% here as \IEEEdisplaynontitleabstractindextext when compsoc mode
% is not selected <OR> if conference mode is selected - because compsoc
% conference papers position the abstract like regular (non-compsoc)
% papers do!
\IEEEdisplaynontitleabstractindextext
% \IEEEdisplaynontitleabstractindextext has no effect when using
% compsoc under a non-conference mode.

% For peer review papers, you can put extra information on the cover
% page as needed:
% \ifCLASSOPTIONpeerreview
% \begin{center} \bfseries EDICS Category: 3-BBND \end{center}
% \fi
%
% For peerreview papers, this IEEEtran command inserts a page break and
% creates the second title. It will be ignored for other modes.
\IEEEpeerreviewmaketitle

\section{Introduction}
\IEEEPARstart{G}{eneralization} ability is an important metric in machine learning as the model is susceptible to being over-trained on the training set and may face distribution shifts between the training and test data. The studies in \cite{keskar2016large,neyshabur2017exploring} reveal the connection between the generalization ability and sharpness of the loss landscape. Based on this, Sharpness-Aware Minimization (SAM) is proposed and renowned as a powerful optimization technique for its ability to enhance model generalization by seeking \textit{flat} loss minima \cite{foret2020sharpness}. Its mechanism operates in a two-step process: it first computes a worst-case adversarial perturbation that maximizes loss within a local neighborhood, and then updates the model parameters by minimizing the loss at this perturbed point. This approach has been highly successful, further incurring widespread adoption in resource-intensive tasks such as fine-tuning large-scale Transformers models and Large Language Models (LLMs) with billion-level parameters\cite{zhong2022improving,sun2024adasam,li2024revisiting}.

\begin{figure}[t]
\centering
\includegraphics[width=0.48\textwidth]{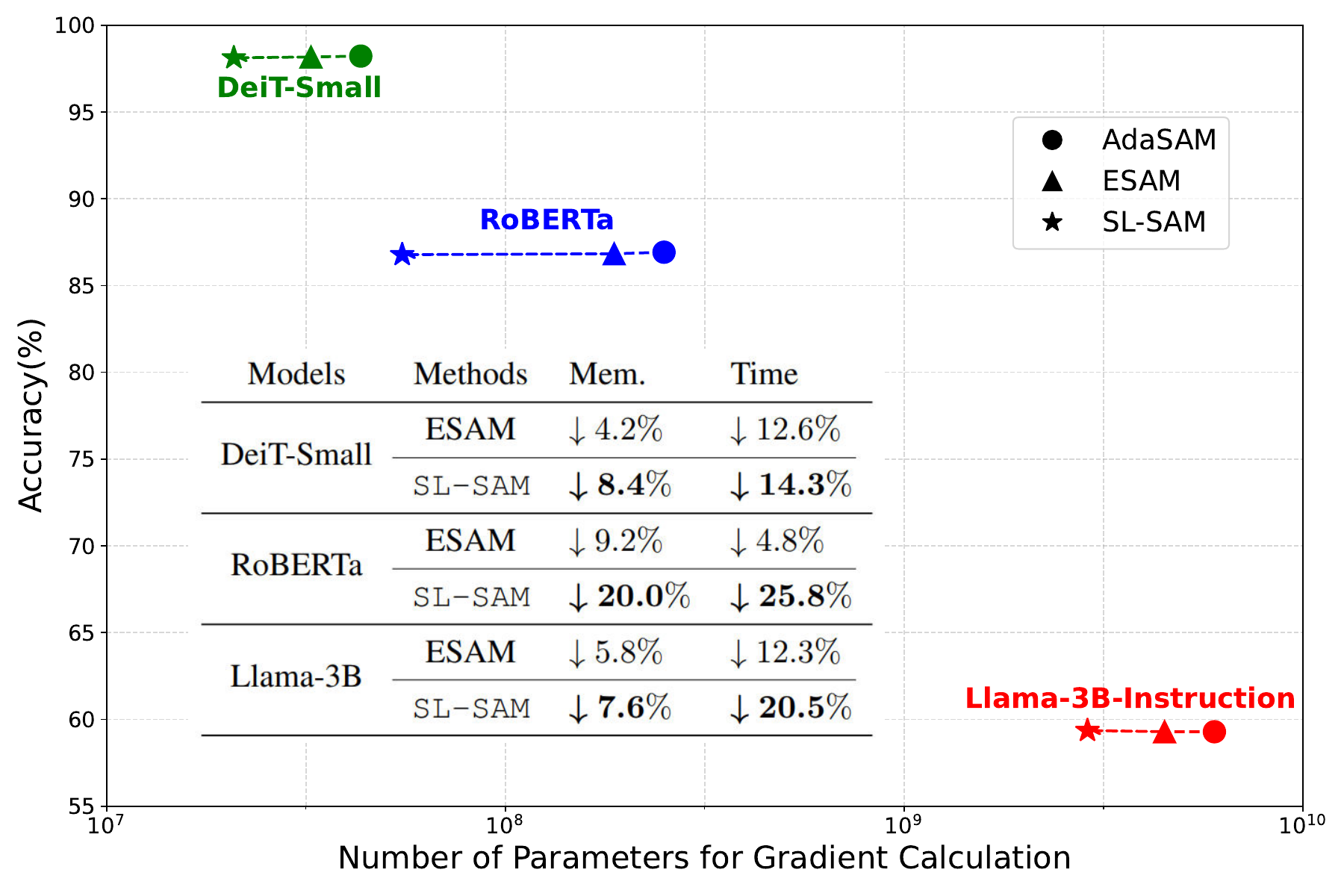}
\caption{\textbf{Average number of model parameters that participate in gradient calculation per iteration vs. task accuracy.} We compare \tool\ with AdaSAM \cite{sun2024adasam} (vanilla SAM with AdamW) and a representative efficient variant ESAM \cite{du2021efficient}. The points connected by dotted lines indicate that they achieve comparable performances, while the parameters participate in backpropagation in \tool\ is 47\%, 22\% and 21\% of that in AdaSAM across three tasks. The results in the table show the GPU memory and wall-clock time savings compared to AdaSAM. Details are referred to the Experiments section.}
\label{overview}
\end{figure}

However, this performance gain comes at a steep price: by requiring two sequential forward and backward passes (perturbation and update), SAM effectively doubles the gradient calculation time of classical optimizers. This prohibitive computational overhead severely limits its practical adoption, creating a critical bottleneck in real-world applications. Naturally, we raise a question:
\begin{center}
    \textit{Can we improve the computation efficiency of SAM?}
\end{center}

\begin{figure*}[t]
\centering
\includegraphics[width=0.95\textwidth]{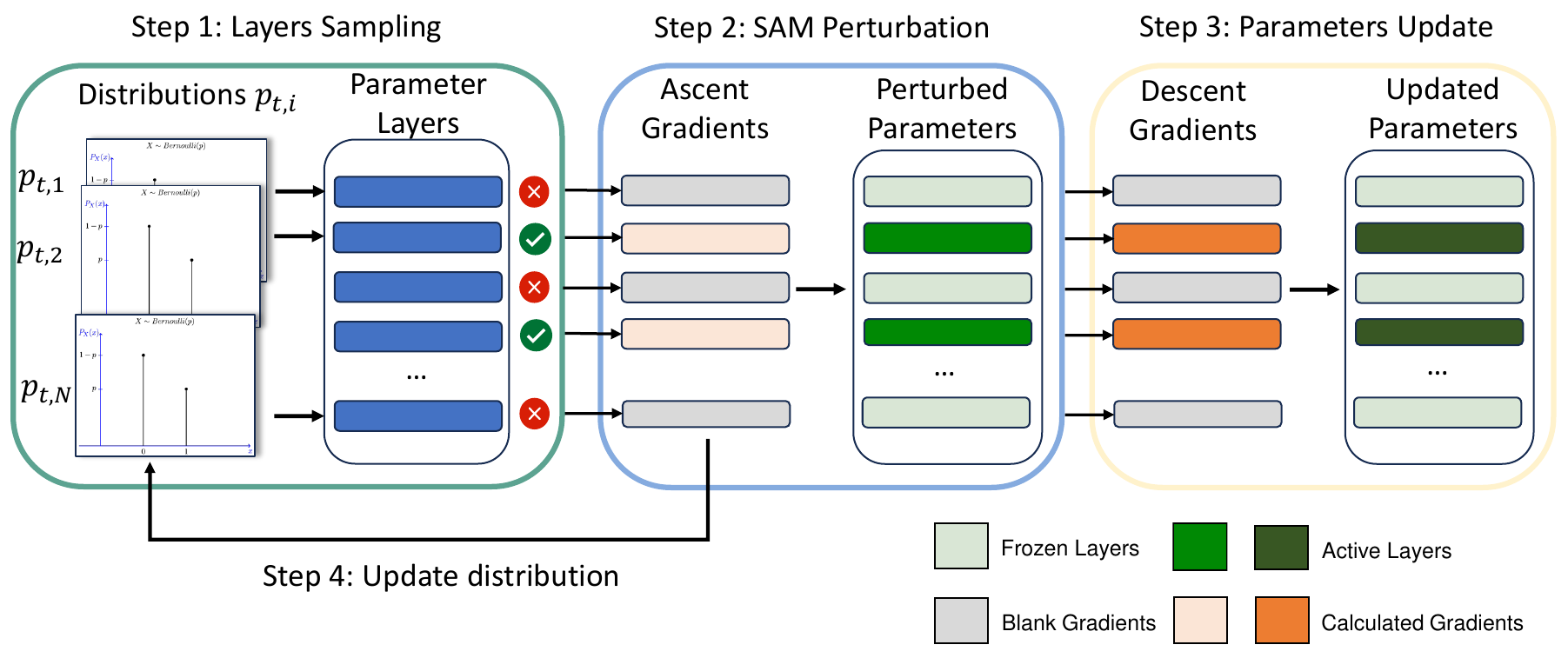}
\caption{\textbf{The workflow of our algorithm \tool.} Step 1: all the parameter blocks (in blue) are sampled according to the distributions: the active layers participate in this training iteration while the others are frozen; Step 2 \& 3: calculate the gradients for active layers to perform the parameter perturbation and update steps; Step 4: update the distributions by the gradient norms obtained in the perturbation.}
\label{fig1}
\end{figure*}

While prior works have sought to improve SAM's efficiency, they often focus on reducing the frequency of the perturbation step \cite{zhao2022randomized,liu2022towards} or introducing sparsity in a suboptimal manner. For instance, ESAM \cite{du2021efficient} applies perturbation to a randomly selected subset of parameters, failing to distinguish which are most critical to stabilize. SSAMF \cite{mi2022make} considers the adoption of the sparse mask, but still needing to obtain the exact gradient first. This situation leaves a crucial gap: proposing a method that indeed reduces the gradient computation overhead in both of SAM's two steps.

To address this gap, we introduce \textbf{Sparse-Layer SAM} (\tool), a novel framework that injects the adaptive layerwise sparsity into SAM. Our approach is predicated on the insight that the gradient norm of the model layer is a strong proxy for a layer's importance to both the perturbation and update steps, motivating us to activate only partial layers' parameters to participate in the gradient calculation. However, a naive greedy strategy for conducting the sparse gradient that simply selects the layers with the top gradient norms at each step is brittle. The set of important layers is highly dynamic, and a greedy approach is susceptible to the stochasticity of mini-batch gradients, failing to explore layers that are temporarily dormant but may become critical for convergence.
Therefore, our key innovation is to reframe layer selection as a multi-armed bandit problem. This enables \tool\ to robustly balance exploration and exploitation, dynamically identifying the most salient layers for optimization over time. In each iteration, the bandit samples an active set of layers for SAM's process: a sparse gradient ascent for perturbation, followed by a sparse gradient descent for model update. All other layers are frozen to skip the gradient calculation to save computation in this iteration. The bandit is guided by using each layer's gradient norm as a reward signal, ensuring computational resources are intelligently focused on the most critical layers of the model. This process is visualized in Figure \ref{fig1}.

We evaluate our method with both theoretical and empirical validation. We formally prove that \tool\ achieves a convergence rate of $\frac{1}{T}\sum_{t=1}^T \mathbb{E}\|\nabla f(x_t)\|_1 = \mathcal{O}(T^{-1/4})$, guaranteeing its optimization property. Furthermore, extensive fine-tuning experiments on DeiT, RoBERTa, and large language models demonstrate that \tool\ delivers a state-of-the-art trade-off between the efficiency and performance. It consistently achieves top-tier performances, including a \#1 rank in downstream evaluation tasks for the LLM, while reducing training time by up to 25.8\% and significantly cutting memory costs compared to vanilla SAM.

Our main contributions are as follows:
\begin{itemize}
    \item \textbf{A Novel Two-Sided Sparsity Framework:} We propose \tool, an optimization algorithm that applies the layerwise sparsity to both the perturbation and update steps of SAM to improve its efficiency. To our best knowledge, this is the first work to achieve this sparsification effect.
    \item \textbf{Formal Convergence Guarantee:} We theoretically prove that \tool\ achieves a convergence rate of $\frac{1}{T}\sum_{t=1}^T \mathbb{E}\|\nabla f(x_t)\|_1 = \mathcal{O}(T^{-1/4})$, providing the first theoretical evidence that a sparse SAM method can maintain the rate of the original SAM.
    \item \textbf{Superior Efficiency and Competitive Performance:} We demonstrate that \tool\ offers the best efficiency while maintaining the great performance. Across several fine-tuning tasks, it always achieves scores comparable to other baselines, especially the highest score on LLM fine-tuning. Benefiting from the sparse gradient, the average number of parameters involved in gradient calculation of \tool\ is dramatically reduced, resulting in significant savings in GPU memory and training time. We show an overview in Figure~\ref{overview}.
    \item \textbf{Great Compatibility:} We extend to inject the sparse layerwise gradient computation into the single-step version of SAM proposed recently. The experimental results still show an efficiency improvement with a marginal loss of performance. Thus, our strategy is compatible with different goals: whether the great performance achieved by the double-step SAM or the shorter training time realized by the single-step SAM.
\end{itemize}

\section{Related Work}

\subsection{Sharpness-Aware Minimization}
Through solving a minimax optimization problem considering the worst-case in the neighborhood, Sharpness-Aware Minimization (SAM)~\cite{foret2020sharpness} finds a flat loss minima by parameter perturbation and update steps. Its success in improving the model generalization has led to a series of works aiming to further enhance its performance \cite{kwon2021asam,mi2022make,liu2022random,zhuang2022surrogate,li2024friendly} and spurred theoretical analysis of its convergence or revelation of its mechanisms~\cite{andriushchenko2022towards, wen2022sharpness, dai2023crucial}. 

Sufficient experiments have shown that the great generalization ability of SAM is also applicable to language models \cite{bahri2021sharpness}.
Among the following works on language models, AdaSAM \cite{sun2024adasam} replaces the base optimizer with AMSGrad together with a convergence analysis. FSAM \cite{zhong2022improving} further suggests generating the Fisher mask to only perturb elements with large gradient values. Recently, Li et al. \cite{li2024revisiting} observe that incorporating SAM into AdamW could significantly mitigate catastrophic forgetting in continual tuning LLMs. However, SAM's primary drawback is its computational cost, as its two-step process doubles the gradient calculation time.

Efforts to mitigate this cost often have some limitations. RST~\cite{zhao2022randomized} conducts the perturbation step randomly; LookSAM \cite{liu2022towards} and SALA \cite{tan2024sharpness} select the strategy that performs SAM periodically. Although these methods could reduce the number of perturbation steps, they still need to calculate the full-dimensional gradients, resulting in the same memory cost as SAM. ESAM~\cite{du2021efficient} and SSAMF~\cite{mi2022make} adopt the heuristic sparsify strategy: the former randomly samples parameters to perturb and the latter employs a sparse mask on the perturbation. However, they also involve the full gradient calculation in two steps and introduce extra steps as selecting a subset of samples and obtaining the sparse mask respectively. SAMPa~\cite{xie2024sampa} conducts the parallelism for the two steps which increases memory usage. In addition, most of them are not verified in the fine-tuning tasks.

Thus, sparsing \textit{both} of SAM's core steps to make the fine-tuning efficient remains an open challenge.

\subsection{Principled Sparsity for Efficient Training}
Sparsity is a cornerstone of efficient deep learning, with representative techniques such as Dropout~\cite{srivastava2014dropout}, and network pruning~\cite{lecun1989optimal,hassibi1993optimal,gale2019state}. The huge size of deep learning models poses the demand for cost reduction, which matches the effects of the sparse structure. Sun et al. \cite{sun2023simple} introduce pruning into the large-size model to reduce the computational cost. DropIT \cite{chen2022dropit} and Back Razor \cite{jiang2022back} drop the activations' elements with small absolute values in the forward pass to reduce the memory cost of activations, then recover the sparse tensors to be dense to compute an approximated gradient in backpropagation. JointSpar \cite{liu2022communication} samples partial layers to calculate the gradients and freezes the others in backpropagation to accelerate the local computation and communication in distributed learning. A key innovation in this method is that the layer selection is framed as a multi-armed bandit problem to adaptively allocate computational resources. PreBackRazor \cite{yu2024sheared} inherits this technique to sample the activations to be pruned and skip the gradient computation for them to improve Back Razor, achieving both memory and computation efficient. Our work is the first to adapt this technique to solve the two-step optimization problem of SAM. We employ the bandit formulation that explicitly accounts for both the gradient ascent and descent steps, creating a method uniquely tailored to the dynamics of sharpness-aware optimization.

\section{Methodology}

This section develops our proposed method \tool. We first review the fundamentals of Sharpness-Aware Minimization (SAM) to establish the context for our work. We then present the motivation for our algorithm, which leverages the gradient norm as a unified criterion for sparsity. Following this, we formally introduce the \tool\ algorithm and conclude with a theoretical analysis that guarantees its convergence.
\subsection{Preliminary: Sharpness-Aware Minimization}
We address the standard non-convex optimization problem in deep learning:
\begin{equation}
    \min_{x \in \mathbb{R}^d} f(x) := \mathbb{E}_{\xi \sim \mathcal{D}} [f(x, \xi)],
\end{equation}
where $x \in \mathbb{R}^d$ represents the model parameters and $f(x, \xi)$ is the loss for a sample $\xi$ from the data distribution $\mathcal{D}$. While standard optimizers can converge to ``sharp'' minima that generalize poorly, Sharpness-Aware Minimization (SAM) \cite{foret2020sharpness} seeks ``flat'' minima to improve generalization.

SAM achieves this via a two-step process at each iteration $t$. Firstly, it finds an adversarial perturbation $\epsilon_t(x_t)$ to perform a gradient ascent step to maximize the loss within a $\rho$-ball neighborhood:
\begin{equation}
    \epsilon_t(x_t) = \rho \frac{\nabla f(x_t, \xi_t)}{\|\nabla f(x_t, \xi_t)\|_2}.
\end{equation}
Secondly, it computes the gradient at this perturbed point, then performs gradient descent to update the parameters.
\begin{equation}
    x_{t+1} = x_t - \eta \nabla f(x_t + \epsilon_t(x_t), \xi_t).
\end{equation}
This two-step process requires two full-dimensional forward and backward passes, making the computational cost of SAM higher than standard optimizers. This bottleneck is the primary challenge we aim to address.

\subsection{Motivation}
Our approach is motivated by the insight that not all layers contribute equally at each training step. We hypothesize that the layer-wise gradient norm is a powerful and unified proxy for identifying the most critical layers for \textit{both} of SAM's steps. This is supported by two established lines of research:

\begin{enumerate}
    \item \textbf{For the Perturbation Step:} 
    Generally, the landscape flatnesses of the model parameters are not uniformly distributed. We could give priority to the parameters with bad flatness to perform perturbation, as a result, only calculating the sparse gradients with respect to these parameters. Zhao et al. \cite{zhao2022penalizing} point out that penalizing the gradient norm leads the loss function to have a small Lipschitz constant, further indicating the model has a flat landscape. This implies that layers with a \textbf{larger gradient norm} likely reside in sharper regions and are the highest priority for SAM's perturbation.
    
    \item \textbf{For the Update Step:} The goal of the descent step is to optimize the model. It is well-established that parameters with a \textbf{larger gradient norm} are most influential in driving the optimization process \cite{johnson2018training, sung2021training}.
\end{enumerate}

These two points converge on one powerful principle:
\begin{tcolorbox}[
  enhanced,
  breakable,
  colback=gray!12,   
  colframe=black,    
  boxrule=0.8pt,     
  arc=1.5mm,         
  left=2mm,right=2mm,top=1mm,bottom=1mm
]
\begin{center}
  \textit{The parameters with a large gradient norm are the most critical for both generalization (perturbation) and convergence (update).}
\end{center}
\end{tcolorbox}

While this principle suggests a simple strategy, i.e. deterministically selecting the layers with the top-$k$ largest gradient norms in each iteration. A recent work SAFER \cite{gopal2025safer} just employs this strategy to sample layers with large norms of gradients of the adversarial loss, then applies SAM on them to address the adversarial overfitting. However, such a greedy strategy is suboptimal for two critical reasons. Firstly, it is purely \textbf{exploitative}, risking that layers with temporarily small gradients are never selected and thus never updated, which can lead to poor convergence. Secondly, its computation cost is significantly higher than the vanilla SAM, resulting in the increased training time (SAFER updates the selected layers periodically to avoid this dilemma, at the expense of losing the instant tracking of dynamic changes during training).

To address these challenges, we require a more robust mechanism that can balance exploration and exploitation. We therefore introduce an \textbf{adaptive sampling strategy}. Instead of adopting a fixed or greedy selection criterion, we maintain a probability distribution $p_t$ over the model's layers. This distribution is updated at each iteration based on the observed gradient norms, effectively allowing the algorithm to ``learn'' and track which layers are important over time in a stable manner. By framing the selection as a multi-armed bandit problem, we provide a principled way to manage the trade-off between exploration and exploitation, ensuring that all layers have a chance to be updated while still focusing computation on the most critical ones. This idea motivates our novel algorithm, \tool.

\subsection{\tool\ Algorithm}

Based on our motivation, we propose \textbf{Sparse-Layer SAM (\tool)}, a novel algorithm that improves the efficiency of SAM by introducing layer-wise sparsity. At each iteration $t$, \tool\ performs the following key steps: 
\begin{enumerate}
    \item \textbf{Layer Sampling:} It samples a subset of layers $S_t$ to be active based on the probability distribution $p_t$.
    \item \textbf{Sparse SAM Update:} It performs the two-step SAM update (perturbation and update) \textit{only} on the parameters of the active layers in $S_t$.
    \item \textbf{Distribution Update:} It updates the sampling distribution to $p_{t+1}$ for the next iteration based on the gradient norms obtained in the perturbation step.
\end{enumerate}
This process is illustrated in Figure \ref{fig1} and detailed in Algorithm \ref{alg1}. Next, we introduce the details of these steps.

\myparagraph{Step \ding{182}: Layer Sampling.}
We consider a model with $N$ layers. We maintain a probability distribution $p_t \in \mathbb{R}^N$, where the $l$-th element $p_{t,l}$ is the probability of selecting layer $l$ at iteration $t$. At the start of the iteration, we sample an active set of layers $S_t \subseteq [N]$ by performing a Bernoulli trial for each layer $l$ with success probability $p_{t,l}$. The initial distribution $p_1$ is uniform, with $p_{1,l} = s/N$ for all layers, where $s$ is a hyperparameter satisfies $\sum_{l} p_{t,l}=s$, representing the desired number of active layers.

\myparagraph{Step \ding{183}: Sparse SAM Update.}
For the sampled active layers $S_t$, we perform a two-step SAM update. First, we compute the stochastic gradient $r_{t,l} = \nabla f_l(x_t, \xi_t)$ for each layer $l \in S_t$. This gradient is used to obtain the perturbation $\epsilon_{t,l}$ with radius $\rho$. Second, we compute the gradient $g_{t,l}$ at the perturbed point and use it to update the parameters $x_{t,l}$. For this step, we use the AdamW optimizer \cite{loshchilov2017decoupled}, a standard choice for fine-tuning Transformer-based models. The parameters of all other layers ($l \notin S_t$) remain frozen during this iteration.

\myparagraph{Step \ding{184}: Adaptive Distribution Update.}
After the update, we adapt the sampling distribution for the next iteration. Our goal is to increase the sampling probability for layers with larger gradient norms. Motivated by \cite{liu2022communication}, we frame this as a multi-armed bandit problem and use an EXP3-based algorithm \cite{auer2002finite} to update the probabilities, as detailed in Algorithm \ref{alg2}. $p_{min}$ and $\alpha_p$ are hyper-parameters where $p_{min}$ denotes the lower bound of $p_{t,d}$'s. $D_{kl}(\cdot||\cdot)$ represents the KL-divergence. The layer-wise gradient norms $\|r_{t,l}\|$ from the active set are used to update the distribution $p_t$ to $p_{t+1}$, ensuring that layers which are currently more important for optimization are more likely to be selected in the future.

\begin{algorithm}[t]
  \caption{Sparse Layer SAM (\tool)}\label{alg1}
  \begin{algorithmic}[1]
  \REQUIRE Initial values $x_1$, perturbation radius $\rho$, learning rate $\eta$, $p_{1,l}=s/N, m_{0,l} = 0, v_{0,l} = \mathbf{0}, \forall l \in [N]$, coefficients $\beta_1, \beta_2 < 1$.
  \FOR{$t = 1, ..., T$}
    \STATE Sample active layers $S_t=\{l:Z_{t,l}=1, \forall l \in [N]\}$ according to the distribution $\mathds{P}(Z_{t,l}=1)=p_{t,l}$, where each $Z_{t,l}$ follows the Bernoulli distribution;
    \STATE Sample a minibatch $\xi_t$ from the dataset;
    \FOR{$l \in S_t$}
    \STATE Compute the gradient $r_{t,l}=\nabla f_l(x_t,\xi_t)$;
    \STATE Compute the perturbation $\epsilon_{t,l}=\rho \frac{r_{t,l}}{\|r_{t,l}\|}$;
    \STATE Compute the gradient $g_{t,l} = \nabla f_l(x_t+\epsilon_t,\xi_t)$;
    \STATE $x_{t+1,l} = \text{AdamW}(x_{t,l}, g_{t,l})$;
    \ENDFOR
    \STATE $p_{t+1,l} = \text{Update Distribution}(p_{t,l}, S_t, \|r_{t,l}\|)$;
  \ENDFOR
  \STATE
  \STATE \textbf{optimizer} AdamW($x_{t,l}, g_{t,l}$)
  \FOR{$l \in S_t$}
  \STATE $m_{t,l} = \beta_1 m_{t-1,l} + (1-\beta_1) g_{t,l}$;
  \STATE $v_{t,d} = \beta_2 v_{t-1,l} + (1-\beta_2) g_{t,l} \odot g_{t,l}$;
  \STATE $x_{t+1,l} = x_{t,l} - \eta \frac{m_{t,l}}{\sqrt{v_{t,l}+\mathbf{\epsilon}}} - \eta \lambda_t x_{t,l}$;
  \ENDFOR
  \STATE \textbf{end optimizer}
  \end{algorithmic}
\end{algorithm}

\begin{algorithm}[t]
  \caption{Update Distribution($p_{t,l}, S_t, \|r_{t,l}\|$)}\label{alg2}
  \begin{algorithmic}[1]
  \REQUIRE $p_{t,l}$ for $l \in [N]$, $S_t$, $\|r_{t,l}\|$ for $l \in S_t$.
  \FOR{$l \in [N]$}
    \IF{$l \in S_t$}
    \STATE $G$ is the maximal value of elements in $\|r_{t,l}\|$'s;
    \STATE $\Tilde{k}_{t,l} = -\frac{\|r_{t,l}\|^2}{p_{t,l}^2} + \frac{G^2}{p_{min}^2}$;
    \ELSE
    \STATE $\Tilde{k}_{t,l} =0$;
    \ENDIF
    \STATE $u_{t,l} = p_{t,l} \exp{(-\alpha_p \Tilde{k}_{t,l} /p_{t,l})}$;
    \STATE $p_{t+1,l} = \textit{argmin}_{q\in \mathcal{P}} D_{kl}(q||u_{t,l})$;
  \ENDFOR
  \ENSURE $p_{t+1,l}$
  \end{algorithmic}
\end{algorithm}

\myparagraph{Comparison with Prior Works:}
Our work is distinguished from prior art in efficient SAM optimization through its unique approach to adaptive, layer-wise sparsity in gradient computation. We categorize related works into two main groups: those that reduce the frequency of SAM updates and those that introduce sparsity like us.

A primary line of research reduces SAM's cost by decreasing its frequency of computing the perturbation gradients, either randomly (RST \cite{zhao2022randomized}), periodically (LookSAM \cite{liu2022towards}, SALA \cite{tan2024sharpness}), or only in the late stages of training \cite{zhou2024sharpness}. These methods reduce computation along the iteration axis. In contrast, our \tool\ reduces computation along the parameter axis at every single step. These two research directions are orthogonal and could be synergistically combined in future work.

Among sparsity-based methods, ESAM \cite{du2021efficient} differs from \tool\ in three crucial ways: (1) ESAM samples parameters randomly, whereas \tool\ uses an adaptive, gradient-norm-based strategy to focus on the most critical layers; (2) ESAM requires an extra forward pass for sample selection, adding overhead that we avoid; and (3) ESAM reverts to a dense gradient for the final update step, while \tool\ maintains sparsity throughout the entire process. Similarly, while SSAMF \cite{mi2022make} also adopts the principle of sparse perturbation, its implementation paradoxically incurs a higher computational cost than standard SAM as it periodically updates the mask and calculates the full-dimensional gradient before applying the mask. In contrast, \tool\ could provide a substantial reduction in gradient computation cost.

\subsection{Theoretical Analysis}
To provide theoretical guarantees for our algorithm, we make the following standard assumptions, which are common in the analysis for algorithms involving adaptive optimizers \cite{das2024towards,zhou2024towards,li2025frac}.

\begin{assumption}[Coordinate-wise $L$-smoothness]\label{assu1}
    $f(x,\xi)$ is differentiable and satisfies the following inequality:
    \begin{equation}
        \|\nabla f_i(x,\xi) - \nabla f_i(y,\xi)\| \leq L \|x_i - y_i\|, \forall i \in [d]. \nonumber
    \end{equation}
\end{assumption}

\begin{assumption}[Coordinate-wise Gradient Variance Bounded]\label{assu2}
    There exist positive constants $\sigma_i$ such that
    \begin{equation}
        \mathbb{E}\|\nabla f_i(x,\xi) - \nabla f_i(x)\|^2 \leq \sigma_i^2, \forall i \in [d].\nonumber
    \end{equation}
    We denote $\sigma_s^2 = \sum_i \sigma_i^2$ additionally.
    
\end{assumption}

\begin{theorem}
    If $f(x)$ in Algorithm \ref{alg1} satisfies Assumptions \ref{assu1} and \ref{assu2}. Assume the constant $\gamma \in (0,1]$, denote $\Hat{\sigma}^2 = \max\{4\sigma_s^2 + 12d\rho^2 L^2, \frac{L(f(x_1)-f^*)}{\gamma^2 T}\}$. We set the coefficients satisfy $1-\sqrt{\beta_1} = \sqrt{\frac{L(f(x_1)-f^*)}{\Hat{\sigma}^2 T}}$, $\beta_2 \leq \sqrt{\beta_1}$, and the parameters satisfy $\epsilon = \frac{\Hat{\sigma}^2}{d}$, $\eta=\mathcal{O}(\sqrt{\frac{1}{dT}})$, $\lambda = \mathcal{O}(\frac{\sqrt{d}}{(T^3 \Hat{\sigma}^2)^{1/4}})$. Let the model is initialized with $\|x_1\|_\infty = \mathcal{O}(\frac{T}{d})$. Then we have the following result for Algorithm \ref{alg1}
    \begin{equation}
        \frac{1}{T}\sum_{t=1}^T \mathbb{E}\|\nabla f(x_t)\|_1 = \mathcal{O}\bigg(\frac{\sqrt{d}}{T^{1/4}} + \sqrt{\frac{d}{T}}\bigg).
    \end{equation}
\end{theorem}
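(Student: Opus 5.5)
The plan is to reduce the analysis to a known convergence template for AdamW-type methods under coordinate-wise smoothness, and to treat the two sources of error introduced by \tool\ as additional bounded noise. The first source is the SAM perturbation. The second is the Bernoulli layer mask $Z_{t,l}$ with probabilities $p_{t,l}\ge p_{min}$. The hyperparameter choices in the statement suggest this route: $\epsilon=\Hat{\sigma}^2/d$, $1-\sqrt{\beta_1}\propto T^{-1/2}$, $\beta_2\le\sqrt{\beta_1}$. They are exactly the choices that make an $\ell_1$-norm Adam analysis (in the style of \cite{das2024towards,li2025frac}) give $\mathcal{O}(\sqrt{d}\,T^{-1/4})$. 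So the bulk of the work is showing that the effective stochastic gradient satisfies the same coordinate-wise variance bound with $\sigma_s^2$ replaced by $\Hat{\sigma}^2$.

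\textbf{Step 1: perturbation error.} Write $g_t=\nabla f(x_t+\epsilon_t,\xi_t)$ and decompose
\[
g_t-\nabla f(x_t)=\big(\nabla f(x_t+\epsilon_t,\xi_t)-\nabla f(x_t,\xi_t)\big)+\big(\nabla f(x_t,\xi_t)-\nabla f(x_t)\big).
\]
By Assumption \ref{assu1} applied coordinatewise, together with $\|\epsilon_t\|\le\rho$ (restricted to each coordinate/layer), the first term is at most $L\rho$ per coordinate. Summing over coordinates gives a $d\rho^2L^2$ contribution. Assumption \ref{assu2} controls the second term. With Young's inequality constants, this yields $\mathbb{E}\|g_t-\nabla f(x_t)\|^2\le 4\sigma_s^2+12d\rho^2L^2\le\Hat{\sigma}^2$. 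This is precisely the constant in the statement, so I expect the argument to match the authors'.

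\textbf{Step 2: sparse masking.} On inactive layers both the moments and the parameters are frozen. I will handle this by conditioning on $S_t$ and writing the per-coordinate descent inequality with the indicator $Z_{t,l}$. Then
\[
\mathbb{E}_{Z}\big[Z_{t,l}\langle\nabla f_l(x_t),\Delta_{t,l}\rangle\big]=p_{t,l}\langle\nabla f_l(x_t),\Delta_{t,l}\rangle,
\]
since $Z_{t,l}$ is independent of $\xi_t$ given the history. The descent therefore loses at most a factor $p_{min}$, which is a constant absorbed into $\mathcal{O}(\cdot)$. Second-order terms only shrink under masking. Because $\sqrt{v_t}$ is bounded below by $\sqrt{\epsilon}$, the effective step on each active coordinate satisfies $\eta/\sqrt{v+\epsilon}\le\eta\sqrt{d}/\Hat{\sigma}$.

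\textbf{Step 3: core Adam argument.} I will apply the coordinate-wise descent lemma
\[
f(x_{t+1})\le f(x_t)+\langle\nabla f(x_t),x_{t+1}-x_t\rangle+\tfrac{L}{2}\|x_{t+1}-x_t\|^2.
\]
I will split $m_t/\sqrt{v_t+\epsilon}$ into a "sign-like" part giving $-\eta\,c\,|\nabla_i f(x_t)|$ and error terms: momentum bias (controlled by $1-\beta_1$ and smoothness between iterates), noise $m_t-\nabla f$ (variance $\le(1-\beta_1)\Hat{\sigma}^2$ plus drift), and the $v_t$ mismatch (handled via $\beta_2\le\sqrt{\beta_1}$, giving $|m_t|/\sqrt{v_t}$ bounded). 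The weight decay term contributes at most $\eta\lambda\|\nabla f\|_1\|x_t\|_\infty$. Along the trajectory, $\|x_t\|_\infty\le\|x_1\|_\infty+T\eta\cdot\mathcal{O}(1)$, and with $\|x_1\|_\infty=\mathcal{O}(T/d)$ and the stated $\lambda$ this product is $\mathcal{O}(\sqrt{d}T^{-1/4})$. Telescoping, dividing by $\eta T$, and substituting $\eta$ and $\beta_1$ then gives the two terms $\sqrt{d}/T^{1/4}$ (from $\sqrt{\Hat{\sigma}^2(1-\beta_1)}$-type noise balancing) and $\sqrt{d/T}$ (from $L\eta d$ and $f(x_1)-f^*$).

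\textbf{Main obstacle.} The hard step is Step 3's handling of the momentum noise together with the adaptive denominator. The masking makes $m_{t,l}$ and $v_{t,l}$ update at random times, so the standard geometric-sum bounds on $\mathbb{E}\|m_t-\nabla f(x_t)\|$ must be redone with random stale intervals. My first attempt is to bound staleness in expectation using $p_{t,l}\ge p_{min}$: the expected gap between updates is at most $1/p_{min}$. I would then treat a frozen coordinate's momentum as evaluated at an older iterate, whose drift from $x_t$ is controlled by $L\eta\sqrt{d}/\Hat{\sigma}$ times the gap. This adds only constant factors to the bias term.
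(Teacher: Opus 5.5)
Your outline tracks the paper's proof closely. Step 1 is Lemma 3, and your two-term split is even cleaner. Step 2 is the $p_{min}$ bookkeeping the paper uses to pass from $\sum_{l\in S_t}$ to $\sum_{i=1}^d$. Step 3 imports the $\ell_1$ AdamW template of \cite{li2025frac}.

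\textbf{The step that fails.} It is the reduction you call the bulk of the work: that a second-moment bound $\mathbb{E}\|g_t-\nabla f(x_t)\|^2\le\hat\sigma^2$ lets you treat the perturbation as ``additional bounded noise''. Your momentum estimate (error variance $\le(1-\beta_1)\hat\sigma^2$) needs $g_t-\nabla f(x_t)$ to be a martingale difference, so that the cross terms in the recursion for $m_t$ vanish. The SAM gradient is not a martingale difference. Since $\epsilon_t$ is built from the same $\xi_t$, the quantity $\mathbb{E}[\nabla f(x_t+\epsilon_t,\xi_t)-\nabla f(x_t,\xi_t)\mid x_t]$ is in general a nonzero bias of size up to $L\rho$ per coordinate. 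No amount of momentum averaging removes a bias.

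\textbf{With $\rho$ fixed, the statement itself fails.} Take $d=N=s=1$, so \tool\ is SAM with AdamW. Let $f(x,\xi)=\frac{L}{2}x^2+\xi x$ with $\xi=a$ w.p.\ $3/4$ and $\xi=-3a$ w.p.\ $1/4$, and $L\rho<2a$. For $L|x|<a$ one has $\mathrm{sign}(Lx+\xi)=\mathrm{sign}(\xi)$, hence $\mathbb{E}[g_t\mid x_t]=Lx_t+L\rho/2$. The mean drift vanishes only at $x^*=-\rho/2$, so the iterates are pushed away from the minimizer $0$ toward $x^*$. Then $\frac1T\sum_t\mathbb{E}|f'(x_t)|$ stays of order $L\rho$, while the claimed bound tends to $0$.

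\textbf{What a correct argument needs.} Split $g_t-\nabla f(x_t)$ into two parts. The first is a martingale part with conditional variance at most $4\sigma_i^2+12\rho^2L^2$, which the template absorbs. The second is a bias part bounded by $L\rho$ per coordinate, carried as an additive error. Then either let $\rho$ shrink with $T$, as the SAM analyses cited in the corollary do, or add a $\rho$-dependent term to the rate.

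\textbf{Following the paper does not repair this.} Its Lemma 5 avoids unbiasedness via Young's inequality. But the injected term $\theta(1-\beta_1)^2(4\sigma_i^2+12\rho^2L^2)$ has $\theta\approx\frac{4}{5(1-\beta_1)}$, so it is of order $(1-\beta_1)$ rather than $(1-\beta_1)^2$. After multiplication by $\frac{\eta}{\sqrt{\epsilon}(1-\sqrt{\beta_1})}$ in the Lyapunov step, it contributes $\Theta(\eta\sqrt{d}\,\hat\sigma)$ per iteration. That term survives telescoping and division by $\eta T$.

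\textbf{Two smaller points in Step 3.}
\begin{itemize}
\item The ``sign-like'' descent $-\eta c|\nabla_i f(x_t)|$ is not available for Adam with $\epsilon=\hat\sigma^2/d$. When $|\nabla_i f(x_t)|\lesssim\sqrt{\tilde v_{t,i}+\epsilon}$, the step gives only $\eta|\nabla_i f(x_t)|^2/\sqrt{\tilde v_{t,i}+\epsilon}$. The template gets $\|\cdot\|_1$, and the factor $\sqrt d$, from Cauchy--Schwarz between this quadratic descent term and the bound on $\sum_{t,i}\mathbb{E}\sqrt{\tilde v_{t,i}+\epsilon}$ in Lemma 4.
\item With the stated $\lambda$ and $\|x_1\|_\infty=\mathcal{O}(T/d)$, you get $\lambda\|x_1\|_\infty$ of order $T^{1/4}/\sqrt{d\hat\sigma}$, not $\mathcal{O}(\sqrt d\,T^{-1/4})$. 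What is needed is $\lambda\|x_1\|_\infty=\mathcal{O}(T^{-1/4})$, which is the hypothesis of Lemma 2.
\end{itemize}

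On the positive side, your explicit treatment of stale moments on inactive layers is more faithful to Algorithm 1 than the paper. The paper's Lemma 5 uses the dense recursion $m_{t,i}=\beta_1 m_{t-1,i}+(1-\beta_1)g_{t,i}$ for every coordinate.
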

\begin{corollary}
We can conclude the $\frac{1}{T}\sum_{t=1}^T \mathbb{E}\|\nabla f(x_t)\|_1 = \mathcal{O}(T^{-1/4})$ convergence rate for SL-SAM from the above theorem. This rate recovers the theoretical result in other SAM-related works \cite{andriushchenko2022towards,mi2022make}.
\end{corollary}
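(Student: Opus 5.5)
The plan is to adapt the $\ell_1$-norm convergence analysis for Adam-type methods with coordinate-wise smoothness (as in \cite{das2024towards,zhou2024towards,li2025frac}). Layer sparsity and the SAM perturbation enter only as extra bias and variance terms in the effective stochastic gradient.

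\textbf{Step 1: an effective unbiased gradient.} Write the update of every coordinate $i$ (in layer $l$) as $x_{t+1,i}=x_{t,i}-\eta Z_{t,l}\bigl(m_{t,i}/\sqrt{v_{t,i}+\epsilon}+\lambda x_{t,i}\bigr)$. We condition on $\mathcal{F}_t$, which makes $p_t$ deterministic.
\begin{itemize}
\item Because $p_{t,l}\ge p_{min}$, the Bernoulli mask $Z_{t,l}$ is independent of $\xi_t$ given $\mathcal{F}_t$.
\item The lower bound on $p_{t,l}$ also caps the inverse-probability factor $1/p_{t,l}$ by $1/p_{min}$. This constant is absorbed into the $\mathcal{O}(\cdot)$, together with $s/N$.
\item Split the SAM gradient as $g_t=\nabla f(x_t)+(\nabla f(x_t+\epsilon_t,\xi_t)-\nabla f(x_t,\xi_t))+(\nabla f(x_t,\xi_t)-\nabla f(x_t))$.
\end{itemize}
By Assumption~\ref{assu1}, the perturbation term is bounded coordinatewise by $L\|\epsilon_{t,i}\|$. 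Summing gives at most $L^2\rho^2 d$ up to a constant, since each layer's perturbation has norm $\rho$ and there are at most $d$ coordinates. By Assumption~\ref{assu2}, the noise term has variance at most $\sigma_s^2$. After squaring with $(a+b+c)^2\le3(\cdot)$, these two terms give exactly the $\Hat\sigma^2\ge4\sigma_s^2+12d\rho^2L^2$ constant in the statement. I therefore treat $g_t$ as a biased gradient with total second-moment error $\Hat\sigma^2$.

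\textbf{Step 2: descent lemma and the momentum/preconditioner split.} Apply coordinate-wise smoothness to $f(x_{t+1})-f(x_t)$. The inner-product term is $-\eta\sum_i Z_{t,l}\nabla_i f(x_t)\,m_{t,i}/\sqrt{v_{t,i}+\epsilon}$.
\begin{itemize}
\item Following the standard trick, replace $v_{t,i}$ by a decorrelated proxy $\tilde v_{t,i}=\beta_2 v_{t-1,i}+(1-\beta_2)(\nabla_i f(x_t)^2+\Hat\sigma^2/d)$. This choice makes $\epsilon=\Hat\sigma^2/d$ natural.
\item Bound the error from the replacement using $\beta_2\le\sqrt{\beta_1}$, which controls $|m_{t,i}|/\sqrt{v_{t,i}}$ uniformly.
\item Unroll the momentum $m_t$ so that its deviation from $\nabla f(x_t)$ splits into two parts. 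The first is a drift term of order $L\eta/(1-\beta_1)$ from smoothness. The second is a noise term of order $\sqrt{1-\beta_1}\,\Hat\sigma$.
\item Taking expectation over $Z_t$ turns the leading term into $-\eta\,\Omega(p_{min})\sum_i|\nabla_i f(x_t)|^2/\sqrt{\tilde v_{t,i}}$. The inequality $a^2/\sqrt{\tilde v}\ge |a|-\sqrt{\tilde v}$ (with appropriate constants) then converts this into an $\|\nabla f(x_t)\|_1$ term minus $\sqrt{\epsilon}$-type remainders.
\end{itemize}

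\textbf{Step 3: weight decay, telescoping and parameter choice.} The decoupled decay term $\eta\lambda\langle\nabla f(x_t),x_t\rangle$ is bounded by $\eta\lambda\|\nabla f(x_t)\|_1\|x_t\|_\infty$. I control $\|x_t\|_\infty\le\|x_1\|_\infty+\mathcal{O}(\eta t)$, using that the Adam step is coordinatewise bounded, and then use $\|x_1\|_\infty=\mathcal{O}(T/d)$ and the prescribed $\lambda$ so that this term is a lower-order fraction of the $\ell_1$ term. Telescoping over $t=1,\dots,T$ and dividing by $\eta T$ should give
\[
\frac1T\sum_t\mathbb{E}\|\nabla f(x_t)\|_1\lesssim \frac{f(x_1)-f^*}{\eta T}+\frac{L\eta d}{1-\beta_1}+\sqrt{(1-\beta_1)d}\,\Hat\sigma+\sqrt{d}\,\Hat\sigma\sqrt{1-\beta_2}+\dots
\]
Substituting $1-\sqrt{\beta_1}=\sqrt{L(f(x_1)-f^*)/(\Hat\sigma^2T)}$ and $\eta\sim1/\sqrt{dT}$ balances these terms to $\sqrt d\,T^{-1/4}+\sqrt{d/T}$. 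The max with $L(f(x_1)-f^*)/(\gamma^2T)$ in $\Hat\sigma^2$ keeps $1-\sqrt{\beta_1}\le\gamma\le1$ valid.

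\textbf{Main obstacle.} I expect the hardest part to be the decorrelation in Step 2 under the layer mask. On inactive coordinates, $m$ and $v$ are frozen rather than decayed, so the effective $\beta$'s become random and depend on $Z$. I would first try to view each coordinate's Adam recursion along its own subsequence of active times. The bounds $p_{min}\le p_{t,l}\le1$ then give a geometric number of skipped steps, which changes the constants but not the rates. The bandit update itself only needs $p_{t,l}\ge p_{min}$ and never needs its regret bound.
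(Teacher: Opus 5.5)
\textbf{Verdict: genuine gap.} The corollary is meant to be read off Theorem 1, and that is all the paper does. For $T\ge 1$ one has $\sqrt{d/T}\le\sqrt{d}\,T^{-1/4}$, so the theorem's bound is $\mathcal{O}(\sqrt{d}\,T^{-1/4})$, i.e.\ $\mathcal{O}(T^{-1/4})$ with $d$, $L$, $\hat\sigma$, $f(x_1)-f^*$ and $p_{min}$ held fixed. The ``recovers'' clause is a comparison with the cited SAM bounds $\frac{1}{T}\sum_t\mathbb{E}\|\nabla f(x_t)\|_2^2=\mathcal{O}(T^{-1/2})$ (up to logarithms). By Jensen these give $\mathcal{O}(T^{-1/4})$ for the averaged norm, and $\|\cdot\|_2\le\|\cdot\|_1\le\sqrt{d}\,\|\cdot\|_2$. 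You skip this deduction and re-derive the theorem instead, and the step where SAM enters does not go through. In Step 1 you correctly call $b_t=\nabla f(x_t+\epsilon_t,\xi_t)-\nabla f(x_t,\xi_t)$ a bias, but Step 2 treats it as zero-mean noise. Dropping the cross term after passing to the $\mathcal{F}_t$-measurable proxy $\tilde v_t$ requires $\mathbb{E}_t[g_t]=\nabla f(x_t)$. A momentum error of order $\sqrt{1-\beta_1}\,\hat\sigma$ also holds only for martingale-difference noise. Momentum does not average a bias away, so an additive term of order $L\rho$ (times a power of $d$) survives independently of $T$, $\eta$ and $\beta_1$. Step 3 therefore cannot balance to $T^{-1/4}$ with $\rho$ fixed.

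This floor is intrinsic, not a bookkeeping issue. Take $d=N=1$, deterministic gradients, $f(y)=\cos y$ (so $L=1$), $\rho=3\pi/2$ and $x_1=-\pi/2$. Then $f'(x_1)=1$, the perturbation is $+\rho$, and $g(x)=f'(x+\rho)=\cos x$ vanishes at $x_1$ and is restoring nearby. The AdamW iterates therefore stay within $\mathcal{O}(\lambda)$ of $-\pi/2$, while $\frac{1}{T}\sum_t|f'(x_t)|\to 1$. Rescaling to $\cos(ky)/k^2$ gives a floor $\Theta(L\rho)$ for every $\rho$. A $T^{-1/4}$ rate thus needs $\rho$ to shrink with $T$, as in the SAM analyses the corollary cites. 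The example meets every hypothesis of Theorem 1, so the theorem as stated also needs $\rho$ tied to $T$. The paper's appendix takes the same shortcut (Lemmas~\ref{lemma3} and~\ref{lemma5} feed $12\rho^2L^2$ in as variance), so following it will not close your argument.

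Even apart from the bias, Step 3 does not balance as written:
\begin{itemize}
\item With $\eta\sim(dT)^{-1/2}$ and $1-\beta_1\approx 2(1-\sqrt{\beta_1})\sim T^{-1/2}$, your drift term $L\eta d/(1-\beta_1)$ is of order $L\sqrt{d}$.
\item The term $\sqrt{d}\,\hat\sigma\sqrt{1-\beta_2}$ need not vanish, since only $\beta_2\le\sqrt{\beta_1}$ is assumed.
\item The stated $\lambda$ and $\|x_1\|_\infty=\mathcal{O}(T/d)$ allow $\lambda\|x_1\|_\infty$ of order $T^{1/4}$, so the weight-decay term is not lower order.
\end{itemize}
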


\begin{remark}
The proof for our theorem is a corollary of the conclusion in \cite{li2025frac}. Note that this proof does not rely on the assumption of gradient bounded adopted in \cite{liu2022communication,sun2024adasam,zhou2024towards}. We provide a detailed derivation in the supplementary material.
\end{remark}

\begin{table*}[!t]
    \renewcommand{\arraystretch}{1.3}
    \caption{Experimental results of fine-tuning DeiT on CIFAR 10 and CIFAR100.}
    \label{table1}
    \centering
    \begin{tabular}{cccccll}
    \toprule
    \multirow{2}{*}{Algorithm} & \multicolumn{2}{c}{CIFAR-10} & \multicolumn{2}{c}{CIFAR-100} & \multirow{2}{*}{Max Mem(MB)} & \multirow{2}{*}{Epoch Time(s)}\\
    \cmidrule(r){2-3} \cmidrule(r){4-5} 
    & Accuracy(\%) &  Active Ratio & Accuracy(\%) &  Active Ratio & & \\
    \midrule
    AdamW & 98.04$\pm$0.02 &  $1.0\times$  & 87.49$\pm$0.08 &  $1.0\times$ & 8584 & 118.51 \\
    AdaSAM & 98.22$\pm$0.06 & $2.0\times$  & 87.86$\pm$0.06 & $2.0\times$ & 8546(100\%) & 231.39(100\%) \\
    RST & 98.16$\pm$0.07 & $1.5\times$  & 87.75$\pm$0.07 & $1.5\times$ & 8548(+0.02\%) & 179.70(-22.33\%) \\
    ESAM & 98.17$\pm$0.04 & $1.5\times$  & 87.72$\pm$0.03 & $1.5\times$ & 8188(-4.19\%) & 202.21(-12.61\%) \\
    SSAM-F & 98.08$\pm$0.08 & $2.0\times$  & 87.72$\pm$0.06 & $2.0\times$ & 8754(+2.43\%) & 242.11(+4.63\%) \\
    \midrule
    \rowcolor{gray!15}
    \tool\ (ours) & 98.12$\pm$0.04 & $0.942\times$  & 87.83$\pm$0.08 & $0.961\times$ & 7832(-8.35\%) & 198.35(-14.28\%) \\
    \bottomrule
    \end{tabular}
\end{table*}

\section{Experiments}
We evaluate \tool\ by fine-tuning models of varying scales on several benchmark tasks\footnote{Codes are available at \texttt{https://github.com/chengyif/SL-SAM.}}. We compare it against a suite of strong baselines: \textbf{AdamW} \cite{loshchilov2017decoupled}, \textbf{AdaSAM} \cite{sun2024adasam} (Adaptive base optimizer version of SAM) and three efficient SAM variants \textbf{RST} \cite{zhao2022randomized}, \textbf{ESAM} \cite{du2021efficient} and \textbf{SSAM-F} \cite{mi2022make}. All SAM-based optimizers use AdamW as their base optimizer since it is the default for transformer models. Experiments are conducted on a single NVIDIA RTX A6000 GPU or A100 GPU, where we measure both task performance and efficiency in terms of time and memory. To ensure broader applicability, we adapted the baseline to several popular frameworks. We note that this adaptation process may introduce some fluctuations in memory cost, as their original implementations were designed for their initial network architecture.

\subsection{Fine-tune Vision-Transformer Model}

\noindent\textbf{Models and Dataset.} We conduct experiments on a Vision Transformer model: DeiT-Small (21.7M parameters)~\cite{touvron2021training}. We fine-tune this model on the CIFAR10 and CIFAR100 datasets, initializing it from publicly available checkpoints pre-trained on ImageNet-1k.

\noindent\textbf{Common Hyper-parameters.} For all experiments, we use a batch size of 128, an initial learning rate of $10^{-4}$ according to \cite{li2024friendly} and decayed with the cosine schedule. The weight decay is $5\times 10^{-5}$ and the model is fine-tuned for 20 epochs. 

\noindent\textbf{Method-Specific Hyper-parameters.} For all SAM-based methods, we set the perturbation radius $\rho=0.01$, following prior work~\cite{sun2024adasam}. For the baselines, we adopt the hyper-parameters recommended in their respective papers: the probability of performing the perturbation step for RST and the sparsity for SSAM-F are set to 0.5; the selecting ratios of parameters and samples for ESAM are also both set to 0.5. For our method, we tune the layer sparsity parameter $s/N$ and find the value 0.2 to be optimal.

\begin{figure}[t]
\centering
\includegraphics[width=0.48\textwidth]{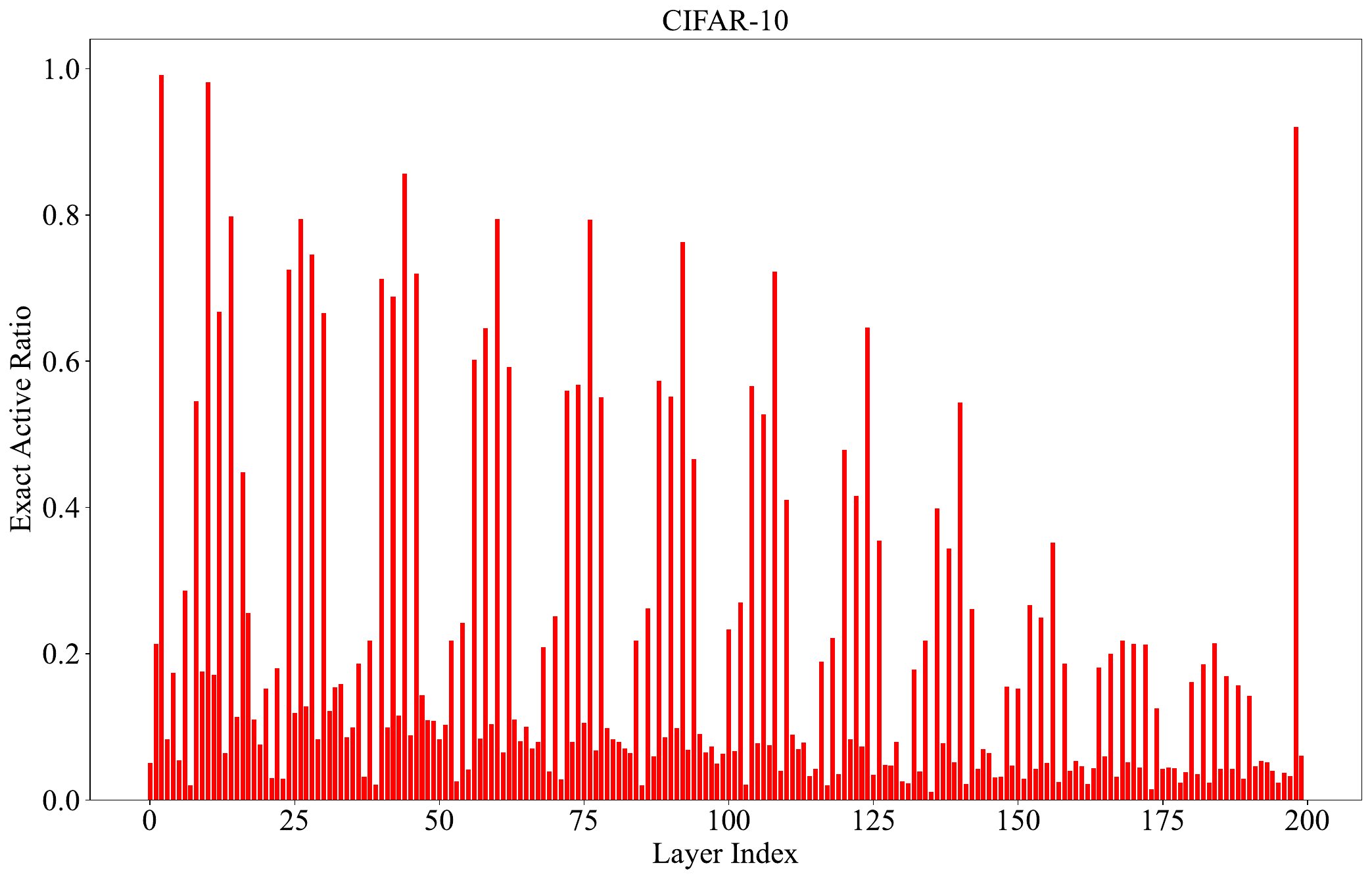}
\caption{The exact active ratio for each layer throughout the DeiT-Small fine-tuning on CIFAR-10.}
\label{count}
\end{figure}

\noindent\textbf{Performance Analysis.}
We report the results of the DeiT fine-tuning experiments over three runs with different random seeds in Table~\ref{table1}. ``Accuracy'' represents the best test accuracy in all epochs.
Across both CIFAR-10 and CIFAR-100 datasets, AdaSAM achieves the highest accuracies. \tool\ shows a negligible performance drop compared to AdaSAM, and consistently performs on par with the other SAM-based baselines, especially outperforms them on CIFAR-100. This demonstrates that our adaptive layer selection strategy successfully preserves the generalization benefits of SAM, even while a significant portion of the model is frozen in each step. While full-dimensional gradient methods like AdaSAM is at the top rank, they do so at a prohibitive cost, which \tool\ is designed to solve.

\noindent\textbf{Computational and Memory Efficiency.}
Though the parameter ``$s/N$" in \tool\ represents the ratio of layers participating in backpropagation, the number of model parameters varies across all layers. 
We count the actual number of parameters that participate in the gradient calculation steps and report its ratio to that of AdamW as ``Active Ratio". The values of RST and ESAM are evaluated according to 0.5 sparsity since they use uniform sampling. \tool\ activates only \textbf{47.1\%} of the parameters per gradient pass in average on CIFAR10. Since SAM requires two backward passes, \tool's total computation per step involves 94.2\% of the model parameters (2$\times$47.1\%). In the same way, 96.1\% parameters participate in gradient computation on CIFAR-100. These are dramatically lower than the 200\% required by AdaSAM and SSAM-F, and also lower than the 100\% of an AdamW step.

This computational efficiency directly translates to significant memory and time savings. As shown in Table~\ref{table1}, by caching fewer gradient coordinates for backpropagation, \tool\ consumes \textbf{8.35\% less GPU memory} than AdaSAM on CIFAR10.\footnote{Max Mem. measures the peak memory usage throughout the training obtained with Nvidia's \texttt{nvidia-smi} command. Considering the evaluation on the test set would influence the memory usage, ``Max Mem" and ``Epoch Time" are counted in the run without evaluation.} Benefiting from the calculation of sparse gradient, \tool\ reduces the average epoch time by \textbf{14.28\%} compared to AdaSAM. Though the time reduction of RST is larger than \tool\, it consumes the same GPU memory as AdaSAM. The savings of ESAM are both weaker than \tool. These results confirm that \tool\ is not only effective but also highly efficient, making it well-suited for memory-constrained environments or for scaling up batch sizes to accelerate the entire training process. 

In addition, for each layer, we count the exact ratio of iterations it is activated to participate in gradient computation in the experiment on CIFAR-10. We plot the result in Figure \ref{count}. Under the setting of sparsity parameter ``$s/N$'' as 0.2, only 15\% of model layers are active in more than half of the iterations, and none of layers are with extremely low active ratio. This indicates that our strategy guarantees sufficient training for all layers, achieving the trade-off between exploration and exploitation successfully.

\begin{table*}[!t]
    \renewcommand{\arraystretch}{1.3}
    \caption{\textbf{Experimental results of fine-tuning RoBERTa model on GLUE.}  \textbf{Metrics:} We report Matthew's correlation for CoLA, Pearson correlation for STS-B, F1 score for MRPC, and accuracy for the remaining tasks. \tool\ achieves the top score on 1/8 tasks, and the second score on 2/8 tasks. In addition, the average score of \tool\ is ranked third.}
    \label{table2}
    \centering
    \begin{tabular}{c|cccccccc|c|c}
    \toprule
    Algorithms & CoLA & STS-B & MRPC & RTE & SST2 & MNLI & QNLI & QQP & Average & Active Ratio\\
    \midrule
    AdamW & 61.64 & 90.91 & 91.68 & 77.98 & 94.04 & 87.43 & 92.75 & 91.90 & 86.04 & $1\times$\\
    AdaSAM & 63.08 & 90.92 & 92.50 & 80.87 & 95.41 & 87.46 & 92.99 & 92.12 & 86.92 & $2\times$\\
    RST & 62.38 & 90.97 & 92.39 & 79.42 & 95.07 & 87.57 & 93.25 & 91.89 &  86.62 & $1.5\times$\\
    ESAM & 63.41 & 90.96 & 92.88 & 80.51 & 94.50 & 87.52 & 92.90 & 91.97 & 86.83 & $1.5\times$\\
    SSAM-F & 61.32 & 91.11 & 92.73 & 80.14 & 95.53 & 87.59 & 93.01 & 92.16 & 86.70 & $2\times$\\
    \midrule
    \rowcolor{gray!15}
    \tool\ (ours) & 63.36 & 90.38 & 93.45 & 79.78 & 95.41 & 87.55 & 92.64 & 91.63 & 86.78 & $0.441\times$\\
    \bottomrule
    \end{tabular}
\end{table*}

\begin{table}[!t]
    \renewcommand{\arraystretch}{1.3}
    \caption{\textbf{GPU usage and average epoch time of SAM-type algorithms in two tasks.} Values in parentheses denote the percentage change relative to AdaSAM.}
    \label{table4}
    \centering
    \begin{tabular}{cll}
         \toprule
         Algorithm & Max Mem(MB) & Epoch Time(s) \\
         \midrule
         & \makecell[c]{CoLA task} & \\
         \midrule 
         AdaSAM & 3296($100\%$) & 52.48($100\%$) \\
         RST & 3316($+0.61\%$) & 40.63($-22.58\%$) \\
         ESAM & 3160($-4.13\%$) & 54.23($+3.33\%$) \\
         SSAM-F & 7288($+121.12\%$) & 65.66($+25.11\%$) \\
         \midrule
         \rowcolor{gray!15}
         \tool\ & \textbf{3080($-6.55\%$)} & 43.66($-16.81\%$) \\
         \midrule
        & \makecell[c]{MNLI task} & \\
         \midrule
         AdaSAM & 8076($100\%$) & 3015.16($100\%$) \\
         RST & 8078($+0.02\%$) & 2314.12($-23.25\%$) \\
         ESAM &7336($-9.16$) & 2871.64($-4.76\%$) \\
         SSAM-F & 7426($-8.05\%$) & 3078.06($+2.09\%$) \\
         \midrule
         \rowcolor{gray!15}
         \tool\ & \textbf{6462}($-19.99\%$) & \textbf{2238.76}($-25.75\%$) \\
         \bottomrule
    \end{tabular}
\end{table}

\begin{table*}[htbp]
    \renewcommand{\arraystretch}{1.3}
    \caption{Performances on downstream tasks of fine-tuning Llama3-3B-Instruct on Open-Platypus. \tool\ achieves the top score on 2/8 tasks (bold), and the second score on 3/8 tasks (underlined). The average score of \tool\ is the highest.}
    \label{table5}
    \centering
    \begin{tabular}{c|cccccccc|c}
    \toprule
    Algorithms & STEM & Humanities & S-Sciences & RTE & Hellaswag & Boolq & Openbookqa & Arc-c & Average \\
    \midrule
    AdamW & 49.92 & 56.49 & 65.42 & 70.76 & 70.27 & 79.14 & 36.20 & 44.11 & 59.039 \\
    AdaSAM & 50.49 & 55.90 & 65.52 & 72.56 & 70.17 & 79.20 & 36.20 & 44.28 & 59.290 \\
    RST & 50.52 & 55.98 & 65.58 & 72.56 & 70.33 & 79.24 & 36.20 & 43.94 & 59.294 \\
    ESAM & 50.02 & 56.13 & 64.64 & 71.48 & 69.99 & 79.79 & 38.20 & 44.11 & 59.295 \\
    \midrule
    \rowcolor{gray!15}
    \tool\ (ours) & 49.83 & 55.73 & 65.00 & \textbf{73.29} & \textbf{70.33} & \underline{79.39} & \underline{37.20} & \underline{44.11} & \textbf{59.360} \\
    \bottomrule
    \end{tabular}
\end{table*}

\begin{table}[htbp]
    \renewcommand{\arraystretch}{1.3}
    \caption{GPU memory and epoch time of SAM-type algorithms in fine-tuning Llama3-3B-Instruct.}
    \label{table6}
    \centering
    \begin{tabular}{cll}
         \toprule
         Algorithm & Max Mem(MB) & Epoch Time(s) \\
         \midrule 
         AdaSAM & 51294($100\%$) & 15969($100\%$) \\
         RST & 51294($+0.00\%$) & 14317($-10.35\%$) \\
         ESAM & 48310($-5.82\%$) & 14013($-12.25\%$) \\
         \midrule
         \rowcolor{gray!15}
         \tool\ & \textbf{47388($-7.61\%$)} & \textbf{12690($-20.53\%$)} \\
         \bottomrule
    \end{tabular}
\end{table}

\subsection{Fine-tune Moderate Language Model}
\noindent\textbf{Model and Task.}
We next evaluate our method on natural language tasks by fine-tuning the \textbf{RoBERTa} model~\cite{liu2019roberta} (125M parameters) on the GLUE benchmark~\cite{wang2018glue}.

\noindent\textbf{Hyper-parameters.}
All methods are run for 30 epochs with a batch size of 16. To ensure a robust and fair comparison, we adopt the task-specific learning rates from the recent state-of-the-art work~\cite{zhao2024galore}. The weight decay is set to 0.01. All other hyper-parameters are kept consistent with our DeiT experiments: the SAM perturbation radius is $\rho=0.01$, and the method-specific settings for RST, ESAM, SSAM-F, and our \tool\ remain unchanged (except that $s/N=0.25$ on the RTE dataset).

\noindent\textbf{Performance Analysis.}
As shown in Table~\ref{table2}, \tool\ secures the highest score on one task (MRPC), second-highest on two tasks (CoLA and SST2), and achieves the third average performance across all baselines. 

AdaSAM achieves the best performance benefiting from the exact gradients that come at a high cost as discussed below. The gap between \tool\ and AdaSAM is tiny, which illustrates that adopting sparse gradients suffices to perform well in this experiment. The performance of \tool\ is comparable to ESAM and SSAM-F. These results verify that SAM finds a flatter loss landscape compared to AdamW, which is in line with the experimental results in \cite{bahri2021sharpness,zhong2022improving,sun2024adasam}.

\noindent\textbf{Computational and Memory Efficiency.}
The great performance of \tool\ is achieved with remarkable efficiency. Due to RoBERTa's relatively uniform layer sizes, our sparsity setting of $s/N=0.2$ results in activating only 22\% of parameters per pass. This means the total computational cost per step (2 passes $\times$ 22\% $\approx$ 44\%) is far less than a standard AdamW step (100\%) and dramatically lower than the 200\% of full-dimensional SAM methods. 

This fundamental efficiency leads to the lowest time and memory cost. We select two tasks with different levels of data size, and count the maximal GPU memory usage and the average wall-clock time of one epoch for the SAM-type algorithms, as detailed in Table~\ref{table4}: the GPU memory cost of \tool\ is 6.55\% and 19.99\% lower than AdaSAM on CoLA and MNLI tasks, and the epoch time is 16.81\% and 25.75\% lower. We emphasize that our algorithm is the only method that achieves both the reduction of memory and time across two tasks, and the magnitude of the reduction is also the greatest in three of four metrics. Though RST has a significant time saving on CoLA, it still requires high memory for its full backpropagation steps. ESAM reduces the GPU memory on both tasks, but the complex sampling overheads diminish the speed gains. As a comparison, our bandit-based layer selection is lightweight and highly effective. Thus, in summary, \tool\ achieves superior generalization while simultaneously contributing to significant savings for time and memory efficiency among SAM-based optimizers.

\subsection{Fine-tune Large Language Model}
\noindent\textbf{Model and Task.}
Inspired by recent findings that SAM can mitigate catastrophic forgetting during continual fine-tuning~\cite{li2024revisiting}, we simulate a similar scenario by fine-tuning the \texttt{Llama-3.2-3B-Instruct} model. As the model has been instruction fine-tuned, we continue to fine-tune the checkpoint on the Open-Platypus dataset \cite{lee2023platypus} that aims to improve the reasoning ability of the model. We evaluate the final model checkpoint on a diverse suite of downstream benchmarks to measure its generalization across different cognitive abilities. Our evaluation tasks include:
\begin{itemize}
    \item \textbf{Domain Knowledge:} MMLU (subsets: STEM, Humanities, Social Sciences)~\cite{hendrycks2020measuring}.
    \item \textbf{Commonsense Reasoning:} RTE~\cite{giampiccolo2007third}, Hellaswag~\cite{zellers2019hellaswag} and BoolQ~\cite{clark2019boolq}.
    \item \textbf{Question Answering:} OpenBookQA~\cite{mihaylov2018can}.
    \item \textbf{Science Exams:} ARC-Challenge~\cite{clark2018think}.
\end{itemize}

\noindent\textbf{Hyper-parameters and Baselines.}
We fine-tune for one epoch using a learning rate of $1 \times 10^{-5}$ and a batch size of 2. 
We try to complete the task on one GPU. This setup ensures a fair and direct comparison, as the publicly available implementations for most baseline methods are designed for single-GPU execution. Due to the maximal GPU usage exceeding the memory of the A100 GPU, SSAM-F is excluded from this experiment. All other method-specific parameters remain consistent with the experiments on GLUE.

\noindent\textbf{Performance Analysis.}
The results on downstream tasks are summarized in Table~\ref{table5}. \tool\ achieves the highest score on two tasks (shown in bold in the Table) and the second highest score on three tasks (underlined). Especially, \tool\ achieves the \textbf{highest average score} across all downstream benchmarks, demonstrating that our adaptive sparsity indeed preserves the model generalization ability in complex and various tasks. The advantage over ESAM underscores the superiority of our bandit-based layer selection over simpler heuristics, particularly in this low batch size regime where ESAM's sample selection strategy offers little benefit. Finally, the performance gain of all SAM-based baselines over AdamW corroborates recent findings on the role of SAM in mitigating catastrophic forgetting~\cite{li2024revisiting}.

\noindent\textbf{Computational and Memory Efficiency.}
\tool's leadership is also reflected in its excellent efficiency which is a critical factor for LLM fine-tuning. We count that the exact sparsity of gradient in \tool\ is $21.03\%$, indicating the $42.06\%$ active ratio for gradient calculation, which is far lower than AdamW and other baselines. As shown in Table~\ref{table6}, it reduces peak GPU memory by \textbf{7.61\%} and wall-clock time by a significant \textbf{20.53\%} compared to AdaSAM. This advantage is a direct result of our adoption of a more aggressive strategy for saving computation cost. Meanwhile, the other baselines suffer from some inherent limitations: RST still incurs the full memory cost of the vanilla SAM, and ESAM only sparsifies the gradient ascent step while leaving the descent step fully dense. The reductions on epoch time of these two methods also have an obvious gap to \tool. By applying sparsification to \textit{both} the perturbation and update steps, \tool\ provides a more comprehensive and effective reduction in computational load. In the resource-intensive domain of LLM fine-tuning, where every percentage point of efficiency matters, \tool\ is validated as a powerful and uniquely practical solution.

\subsection{Robustness and Ablation Study}

\textbf{Robustness Study.} An important hyper-parameter of our proposed algorithm SL-SAM is $s/N$, i.e. the desired ratio of active model layers, also known as the sparsity of model layers, which is set to 0.2 in most experiments. In this section, we show the performances across different sparsities in fine-tuning the DeiT-Small model. As shown in Figure \ref{robust}, the test accuracies are close in the sparsity range of [0.2,0.25,0.3,0.35] on both CIFAR-10 and CIFAR-100 datasets and have a drop at 0.15. Though the average accuracy of three runs is at the highest level when the sparsity is set to 0.35, the performance at sparsity 0.2 has a negligible gap to it and obviously could save more GPU memory and time cost. To sum up, we adopt the selection of 0.2 in our experiments in the main body. 

\begin{figure}[t]
\centering
\includegraphics[width=0.48\textwidth]{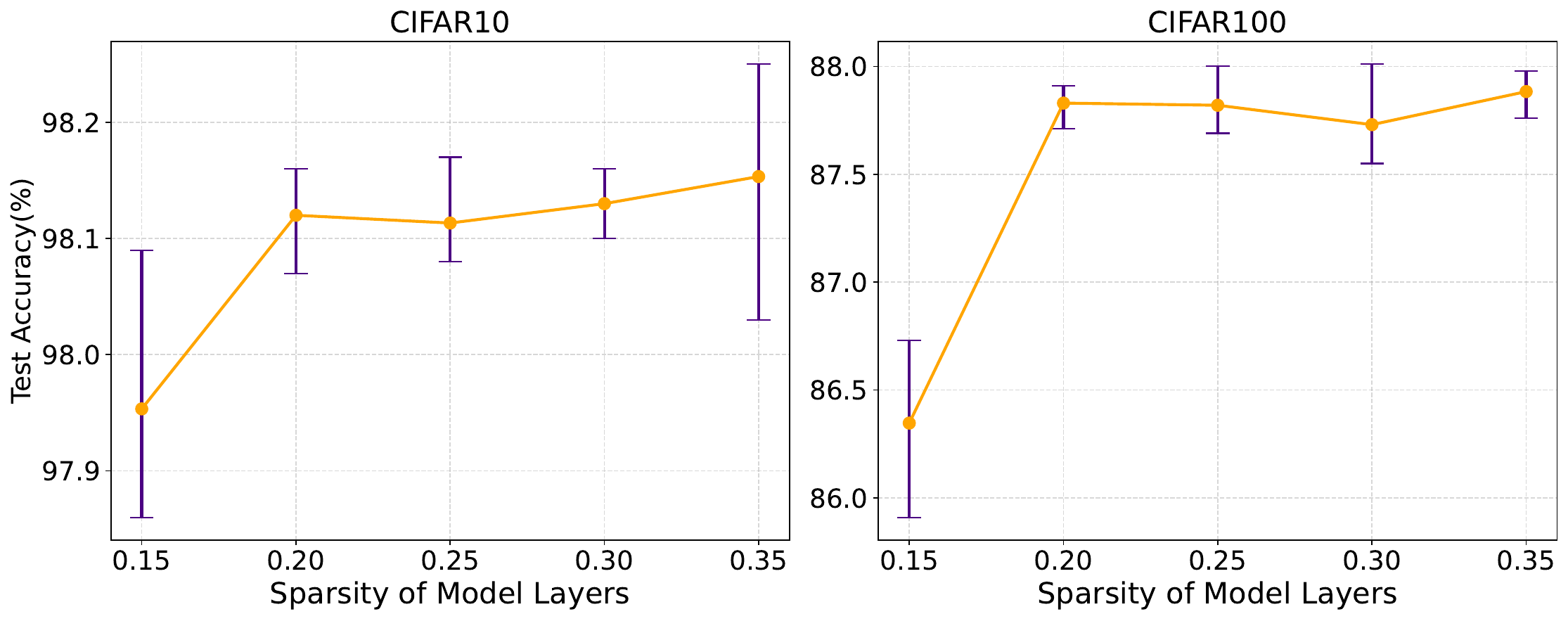}
\caption{\textbf{Test Accuracy v.s. Sparsity of Model Layers in SL-SAM.} Left: CIFAR-10 dataset; Right: CIFAR-100 dataset.}
\label{robust}
\end{figure}

\noindent\textbf{Ablation Study.} The core of SL-SAM is the identification of model layers through a multi-armed bandit method. We also compare this with two heuristic methods: the first one selects the layers randomly with a uniform distribution, named as Random-SL-SAM; the second one computes the full-dimensional gradient then selects the layers with the top-k largest norms in a greedy manner, named as Top-SL-SAM. We compare these three algorithms on fine-tuning DeiT-Small. Top-SL-SAM fails to converge just after the first epoch. We make a deep investigation into this result and find that as many as 137 of 200 model layers are not selected to participate in training throughout the whole epoch. This is in line with the insight in the ``Introduction" section: ``a greedy approach is susceptible to the stochasticity of mini-batch gradients, failing to explore layers that are temporarily dormant but may become critical for convergence". The test accuracies of Random-SL-SAM are $97.93\pm0.08\%$ and $86.19\pm0.21\%$ on CIFAR-10 and CIFAR-100 respectively, have a gap of \textbf{0.19\%} and \textbf{1.64\%} to SL-SAM, as well as a larger standard deviation. This advantage verifies that our strategy focuses the gradient calculation on those impactful layers, thus improving the convergence of the model.

\section{Extension: Couple with Single-Step SAM}
We notice that a recent work \cite{ji2024single} proposes an algorithm $\mathrm{S}^2$-SAM which performs the perturbation step using the gradient calculated in the last iteration. As a result, $\mathrm{S}^2$-SAM only needs to calculate the gradient once per iteration, reducing the computation overhead of SAM remarkably. In this section, we couple the ``Sparse Layer Selection" technique in \tool\ with this Single-Step SAM to form a novel algorithm named as SL-$\mathrm{S}^2$-SAM. Two algorithms are listed as Algorithms \ref{alg3} and \ref{alg4} (we also employ AdamW as the base optimizer for the model update step). We conduct experiments to show that SL-$\mathrm{S}^2$-SAM could further reduce the cost of $\mathrm{S}^2$-SAM with the help of sparse gradients.

\begin{algorithm}[t]
  \caption{$\mathrm{S}^2$-SAM}\label{alg3}
  \begin{algorithmic}[1]
  \REQUIRE Initial values $x_1$, perturbation radius $\rho$, learning rate $\eta$, $m_{0,l} = 0, v_{0,l} = \mathbf{0}, \forall l \in [N]$, coefficients $\beta_1, \beta_2 < 1$.
  \FOR{$t = 1, ..., T$}
    \STATE Sample a minibatch $\xi_t$ from the dataset;
    \STATE Compute the perturbation $\epsilon_t=\rho \frac{g_{t-1}}{\|g_{t-1}\|}$;
    \STATE Compute the gradient $g_t = \nabla f(x_t+\epsilon_t,\xi_t)$;
    \STATE $x_{t+1} = \text{AdamW}(x_t, g_t)$;
  \ENDFOR
  \end{algorithmic}
\end{algorithm}

\begin{algorithm}[t]
  \caption{SL-$\mathrm{S}^2$-SAM}\label{alg4}
  \begin{algorithmic}[1]
  \REQUIRE Initial values $x_1$, perturbation radius $\rho$, learning rate $\eta$, $p_{1,l}=s/N, m_{0,l} = 0, v_{0,l} = \mathbf{0}, \forall l \in [N]$, coefficients $\beta_1, \beta_2 < 1$.
  \FOR{$t = 1, ..., T$}
    \STATE Sample active layers $S_t=\{l:Z_{t,l}=1, \forall l \in [N]\}$ according to the distribution $\mathds{P}(Z_{t,l}=1)=p_{t,l}$, where each $Z_{t,l}$ follows the Bernoulli distribution;
    \STATE Sample a minibatch $\xi_t$ from the dataset;
    \FOR{$l \in S_t$}
    \STATE Compute the perturbation $\epsilon_{t,l}=\rho \frac{g_{t-1,l}}{\|g_{t-1,l}\|}$;
    \STATE Compute the gradient $g_{t,l} = \nabla f_l(x_t+\epsilon_t,\xi_t)$;
    \STATE $x_{t+1,l} = \text{AdamW}(x_{t,l}, g_{t,l})$;
    \ENDFOR
    \STATE $p_{t+1,l} = \text{Update Distribution}(p_{t,l}, S_t, \|g_{t,l}\|)$;
  \ENDFOR
  \end{algorithmic}
\end{algorithm}

\begin{table*}[!t]
    \renewcommand{\arraystretch}{1.3}
    \caption{Experimental results of fine-tuning DeiT on CIFAR 10 and CIFAR100.}
    \label{ext_table3}
    \centering
    \begin{tabular}{cccccll}
    \toprule
    \multirow{2}{*}{Algorithm} & \multicolumn{2}{c}{CIFAR-10} & \multicolumn{2}{c}{CIFAR-100} & \multirow{2}{*}{Max Mem(MB)} & \multirow{2}{*}{Epoch Time(s)}\\
    \cmidrule(r){2-3} \cmidrule(r){4-5} 
    & Accuracy(\%) &  Active Ratio & Accuracy(\%) &  Active Ratio & & \\
    \midrule
    $\mathrm{S}^2$-SAM & 98.10$\pm$0.08 &  $1.0\times$  & 87.59$\pm$0.07 &  $1.0\times$ & 8466(100\%) & 119.76(100\%) \\
    % \midrule
    SL-$\mathrm{S}^2$-SAM & 98.04$\pm$0.02 & $0.651\times$  & 87.45$\pm$0.05 & $0.676\times$ & 8216(-2.95\%) & 114.47(-4.42\%) \\
    \bottomrule
    \end{tabular}
\end{table*}

\begin{table*}[!t]
    \renewcommand{\arraystretch}{1.3}
    \caption{Experimental results of fine-tuning two language models: \textbf{(Top):} RoBERTa model on the GLUE benchmark; \textbf{(Bottom):} Downstream tasks performances after fine-tuning Llama-3.2-3B-Instruction on Open-Platypus.}
    \label{ext_table4}
    \centering
    \begin{tabular}{c|cccccccc|c|c|l}
    \toprule
    Algorithms & CoLA & STS-B & MRPC & RTE & SST2 & MNLI & QNLI & QQP & Average & Active Ratio & Epoch Time(s)\\
    \midrule
    $\mathrm{S}^2$-SAM & 58.51 & 90.86 & 92.69 & 79.78 & 95.07 & 87.25 & 92.90 & 91.95 & 86.13 & $1\times$ & 1630(100\%) \\
    SL-$\mathrm{S}^2$-SAM & \textbf{59.33} & 90.59 & \textbf{93.21} & 79.42 & 94.27 & \textbf{87.28} & 92.62 & 90.69 & 85.93 & $0.212\times$ & 1129(-30.74\%) \\
    \midrule
    \midrule
    Algorithms & STEM & Human. & S-Sciences & RTE & Hella. & Boolq & Open. & Arc-c & Average & Active Ratio & Epoch Time(s)\\
    \midrule
    $\mathrm{S}^2$-SAM & 50.14 & 56.58 & 65.19 & 70.76 & 70.14 & 79.08 & 36.60 & 44.11 & 59.075 & $1\times$ & 8425(100\%)\\
    SL-$\mathrm{S}^2$-SAM & 49.92 & 55.92 & 64.06 & \textbf{71.48} & 69.79 & 78.35 & \textbf{38.00} & 43.26 & 58.848 & $0.477\times$ & 7171(-14.9\%) \\
    \bottomrule
    \end{tabular}
\end{table*}

\noindent\textbf{Model Setups.} We conduct the same experiments as those in section IV: fine-tuning DeiT-Small on CIFAR, RoBERTa on GLUE and Llama-3.2-3B-Instruction models on Open-Platypus. The learning rate is tuned in [5e-5, 8e-5, 1e-4, 1.5e-4] when training DeiT-Small. The layer sparsity of SL-$\mathrm{S}^2$-SAM is set to 0.3 for DeiT-Small, 0.2 for all datasets in GLUE and 0.5 for Llama-3.2-3B-Instruction. All other hyper-parameters are kept unchanged. Specially, SL-$\mathrm{S}^2$-SAM runs the standard AdamW step for all layers in the first iteration to initial the record of the previous gradients.

\noindent\textbf{Performance Analysis.} We summarize the experimental results in Tables \ref{ext_table3} and \ref{ext_table4} respectively. On two CIFAR datasets, the test accuracies of SL-$\mathrm{S}^2$-SAM are marginally lower than those of $\mathrm{S}^2$-SAM. For the language models fine-tuning, SL-$\mathrm{S}^2$-SAM is ahead of $\mathrm{S}^2$-SAM in three of eight datasets on GLUE (shown in bold), as well as yielding higher scores than SL-$\mathrm{S}^2$-SAM in two tasks during the evaluations of Llama-3.2-3B-Instruction on downstream tasks. Across all experimental cases, the gap of average scores between SL-$\mathrm{S}^2$-SAM and $\mathrm{S}^2$-SAM is consistently within 0.2.

The performances of SL-$\mathrm{S}^2$-SAM are close to $\mathrm{S}^2$-SAM, but it is at a disadvantage overall. We analyze this phenomenon and think this is due to the latencies of the adopted previous gradients for perturbation of these two algorithms have an obvious difference. Specifically, $\mathrm{S}^2$-SAM uses the gradients in the last iteration for perturbation, which yields a latency of one iteration compared to the standard SAM. As a result, the performances of $\mathrm{S}^2$-SAM are falling behind two-step SAM methods (results shown in the last section) in all fine-tuning experiments. Then, since SL-$\mathrm{S}^2$-SAM samples partial layers to calculate the gradients in each iteration, the latency of the gradients for perturbation equals the interval between two sampled iterations for one model layer. This latency is no less than 1 and may be large for some seldom sampled layers, leading to the direction of perturbation has a bias to that of the current gradient adopted in standard SAM. Thus, we speculate that the latency of the gradient for perturbation is an important factor that affects the performance. This disadvantage also results in the variation of the value of sparsity across three sets of experiments. We consider to improve the performance SL-$\mathrm{S}^2$-SAM through correcting the perturbation direction in the future work.

\noindent\textbf{Computational and Memory Efficiency.} We count the actual active ratio of model parameters, peak GPU memory usage and average epoch training time in three experiments. The active ratio of parameters is 65.1\% in fine-tuning DeiT-Small on CIFAR10, yielding a \textbf{2.95\%} and \textbf{4.42\%} reduction on GPU memory and epoch time. Then on all datasets in GLUE, the average active ratio of SL-$\mathrm{S}^2$-SAM is 21.2\%, significantly lower than the 1.0 of $\mathrm{S}^2$-SAM. We select the MNLI dataset as a representative, and the run on it shows a memory reduction of \textbf{17.38\%} and a time cost reduction of \textbf{30.74\%}. Finally, SL-$\mathrm{S}^2$-SAM activates 47.7\% parameters in average to participate in the training on Llama-3.2-3B-Instruction, contributing to \textbf{14.9\%} of the training time saving. These results indicate that our layer selection strategy consistently help to improve the efficiency of single-step SAM by reducing the overhead of gradient calculation. 

\section{Conclusion}
To address the prohibitive computational cost of SAM, we introduced \tool\ that applies SAM's steps to a dynamically selected subset of layers. Grounded in the principle that the gradient norm is a strong proxy for layer importance for both of SAM's perturbation and update steps, \tool\ uses an adaptive sampling strategy to focus computation on impactful layers. We prove its convergence and empirically demonstrate that it matches the performance of vanilla SAM while substantially reducing computational overhead, resulting in great savings of memory and time in fine-tuning. The extension work couples the sparse gradient computation with the single-step SAM. The experiments consistently show a negligible performance drop and an obvious cost reduction. In summary, our work presents \tool\ as a practical algorithm that makes the SAM-based training computationally feasible for more machine learning applications.

% if have a single appendix:
%\appendix[Proof of the Zonklar Equations]
% or
%\appendix  % for no appendix heading
% do not use \section anymore after \appendix, only \section*
% is possibly needed

% use appendices with more than one appendix
% then use \section to start each appendix
% you must declare a \section before using any
% \subsection or using \label (\appendices by itself
% starts a section numbered zero.)
%

\appendices

% you can choose not to have a title for an appendix
% if you want by leaving the argument blank

% Can use something like this to put references on a page
% by themselves when using endfloat and the captionsoff option.
\ifCLASSOPTIONcaptionsoff
  \newpage
\fi

% trigger a \newpage just before the given reference
% number - used to balance the columns on the last page
% adjust value as needed - may need to be readjusted if
% the document is modified later
%\IEEEtriggeratref{8}
% The "triggered" command can be changed if desired:
%\IEEEtriggercmd{\enlargethispage{-5in}}

% references section

% can use a bibliography generated by BibTeX as a .bbl file
% BibTeX documentation can be easily obtained at:
% http://mirror.ctan.org/biblio/bibtex/contrib/doc/
% The IEEEtran BibTeX style support page is at:
% http://www.michaelshell.org/tex/ieeetran/bibtex/
\bibliographystyle{IEEEtran}
\bibliography{references}

\newpage
\onecolumn
\section{Detailed Proof of Theorem}

\begin{lemma}\label{lemma1}
    (Lemma 2 in \cite{li2025frac}) If $\beta_2 \leq \sqrt{\beta_1}$, then we have the following result for SL-SAM
    \begin{equation}
        \frac{m_{t,i}^2}{v_{t,i}} \leq 8, \forall t \in [T], i \in [d]. \nonumber
    \end{equation}
\end{lemma}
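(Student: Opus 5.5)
The plan is to reduce the claim to the corresponding statement for plain Adam-type moment recursions, and then to bound the ratio coordinatewise by Cauchy--Schwarz, exactly as in Lemma~2 of \cite{li2025frac}.

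\textbf{Step 1: reduce to dense AdamW on a subsequence.} Fix a coordinate $i$ and let $l$ be the layer containing it. In Algorithm~\ref{alg1}, $m_{t,l}$ and $v_{t,l}$ change only at iterations with $l\in S_t$; at all other iterations they are frozen. Let $\tau_1<\tau_2<\dots$ be the iterations at which layer $l$ is active. Then, for any $t$, the pair $(m_{t,i},v_{t,i})$ equals the pair produced by the standard recursions
\[
m_k=\beta_1 m_{k-1}+(1-\beta_1)g_k,\qquad v_k=\beta_2 v_{k-1}+(1-\beta_2)g_k^2,
\]
run on the reindexed sequence $g_k:=g_{\tau_k,i}$ with $m_0=v_0=0$, evaluated at the last active index $k\le t$. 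If no such index exists, both are zero and the bound is trivial, taking $0/0$ as $0$ or using $v+\epsilon$. The bandit sampling and the SAM perturbation affect only which values $g_k$ appear, not the algebraic structure. Hence it suffices to prove $m_k^2\le 8v_k$ for arbitrary real sequences $(g_k)$.

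\textbf{Step 2: Cauchy--Schwarz.} Unrolling the recursions gives
\[
m_k=(1-\beta_1)\sum_{j=1}^k\beta_1^{k-j}g_j,\qquad v_k=(1-\beta_2)\sum_{j=1}^k\beta_2^{k-j}g_j^2 .
\]
Write $\beta_1^{k-j}g_j=\bigl(\beta_1^{k-j}\beta_2^{-(k-j)/2}\bigr)\bigl(\beta_2^{(k-j)/2}g_j\bigr)$ and apply Cauchy--Schwarz:
\[
m_k^2\le(1-\beta_1)^2\sum_{j}\Bigl(\tfrac{\beta_1^2}{\beta_2}\Bigr)^{k-j}\sum_j\beta_2^{k-j}g_j^2
\le\frac{(1-\beta_1)^2}{(1-\beta_2)\bigl(1-\beta_1^2/\beta_2\bigr)}\,v_k .
\]
This bound requires $\beta_1^2<\beta_2$.

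\textbf{Step 3: the constant (main obstacle).} It remains to show that the hypothesis $\beta_2\le\sqrt{\beta_1}$ makes this prefactor at most $8$. This is where care is needed about the exact parametrization and convention used in \cite{li2025frac}. If the relation is read as linking the effective memories of the two averages, i.e.\ the geometric weights satisfy $\beta_1\le\sqrt{\beta_2}$, the calculation goes as follows. Set $\beta_1\le\sqrt{\beta_2}$, so $\beta_1^2/\beta_2\le \beta_1/\sqrt{\beta_2}\cdot\beta_1/\sqrt{\beta_2}$. Then bound $1-\beta_1^2/\beta_2\ge 1-\beta_1/\sqrt{\beta_2}$ and $(1-\beta_1)^2\le(1-\beta_1)(1-\beta_1)$. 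Using $1-\beta_2=(1-\sqrt{\beta_2})(1+\sqrt{\beta_2})$ then gives a ratio of the form $(1-\beta_1)^2/\bigl((1-\sqrt{\beta_2})^2\cdot\text{const}\bigr)$, which is bounded by a small absolute constant; the factor $8$ comes out of this chain of elementary inequalities. I would first verify this algebra with the convention fixed by the theorem, namely $1-\sqrt{\beta_1}$ small and $\beta_2\le\sqrt{\beta_1}$. If the direct Cauchy--Schwarz constant degenerates under that convention, I would instead split the sum at the index where $\beta_1^{k-j}$ and $\beta_2^{(k-j)/2}$ cross and bound the two parts separately. That would likely be the origin of the constant $8=2\cdot 4$.

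The sparsification plays no role beyond Step 1.
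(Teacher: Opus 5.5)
Your Steps 1 and 2 are sound. Freezing $m_{t,l},v_{t,l}$ on inactive iterations does reduce the claim to the dense Adam recursions along the subsequence of active iterations. The paper never justifies this point; it only cites Lemma 2 of its reference. Your Cauchy--Schwarz bound is correct, and it is in fact sharp: taking $g_j\propto(\beta_1/\beta_2)^{k-j}$ gives equality, so for $\beta_2>0$
\[
\sup_{g_1,\dots,g_k}\frac{m_k^2}{v_k}=\frac{(1-\beta_1)^2}{1-\beta_2}\sum_{j=0}^{k-1}\Bigl(\frac{\beta_1^2}{\beta_2}\Bigr)^{j}.
\]
Consequently no refinement, such as splitting the sum at a crossing index, can improve on Step 2. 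Whatever the right hypothesis is, it must make this quantity at most $8$ uniformly in $k$.

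That is where Step 3 breaks, and it cannot be repaired as stated. The hypothesis $\beta_2\le\sqrt{\beta_1}$ bounds $\beta_2$ only from above, so it permits $\beta_2\le\beta_1^2$. Take $\beta_1=0.9$, $\beta_2=0.5$, and the sequence $g_1=1$, $g_j=0$ for $j\ge 2$; then $m_k^2/v_k=0.02\cdot 1.62^{k-1}>8$ for all $k\ge 14$. For $\beta_2=\beta_1^2$ the supremum is $\frac{1-\beta_1}{1+\beta_1}\,k$, which is unbounded in $k$ (of order $\sqrt{T}$ at $k\asymp T$ when $1-\beta_1\asymp T^{-1/2}$).

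Your swapped reading $\beta_1\le\sqrt{\beta_2}$ fails as well. It bounds $\beta_2$ only from below, and the case $\beta_2=\beta_1^2$ is again allowed. Moreover, the impulse $g_j=0$ for $j<k$, $g_k=1$ gives $m_k^2/v_k=(1-\beta_1)^2/(1-\beta_2)$. This equals $10$ at $(\beta_1,\beta_2)=(0.9,0.999)$ and is unbounded as $\beta_2\to 1$. Accordingly your chain contains two false steps:
\begin{itemize}
\item the lower bound $1-\beta_1/\sqrt{\beta_2}$ can be $0$;
\item under $\beta_1\le\sqrt{\beta_2}$ the ratio $(1-\beta_1)^2/(1-\sqrt{\beta_2})^2$ is at least $1$ and has no upper bound, so it is not ``a small absolute constant.''
\end{itemize}

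What the argument actually needs is a two-sided condition on $\beta_2$. For example, suppose $\beta_1\le\beta_2\le\sqrt{\beta_1}$. Then $1-\beta_2\ge 1-\sqrt{\beta_1}\ge\frac{1-\beta_1}{2}$ and $1-\beta_1^2/\beta_2\ge 1-\beta_1$. Your Step 2 prefactor then gives $m_k^2\le 2v_k\le 8v_k$ immediately, and a choice such as $\beta_2=\sqrt{\beta_1}$, compatible with the theorem, is covered. So the correct outcome of your Step 3 is to recognize that the lemma, as transcribed here, is missing a lower bound on $\beta_2$. It is not correct to assert that the constant $8$ emerges from elementary inequalities.
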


\begin{lemma}\label{lemma2}
    (Lemma 2 and ``Proof 1" in \cite{li2025frac}) If $\eta \lambda \leq \frac{\sqrt{v}}{5T^{5/4}}$, $\|x_1\|_\infty \leq \frac{\sqrt{v}}{4\lambda T^{1/4}}$, $\frac{\sqrt{v}}{T^{1/4}} < 1$, and $\beta_2 \leq \sqrt{\beta_1}$, then we have the following result for SL-SAM
    \begin{equation}
        \lambda \|x_t\|_\infty \leq \frac{\sqrt{v}}{T^{1/4}}, \forall t \in [T]. \nonumber
    \end{equation}
\end{lemma}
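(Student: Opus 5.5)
The plan is a direct coordinate-wise induction on $t$. It exploits two facts. First, by Lemma \ref{lemma1} the AdamW direction is uniformly bounded. Second, the layer sampling in SL-SAM either applies the usual AdamW step to a coordinate or leaves it untouched. Throughout, I read $v$ as the fixed scalar appearing in the hypotheses, with $\sqrt{v}/T^{1/4}<1$.

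\textbf{Step 1 (per-coordinate recursion).} Fix a coordinate $i$ belonging to layer $l$. If $l\notin S_t$, then $x_{t+1,i}=x_{t,i}$. If $l\in S_t$, then
\begin{equation}
x_{t+1,i}=(1-\eta\lambda)x_{t,i}-\eta\frac{m_{t,i}}{\sqrt{v_{t,i}+\epsilon}} .\nonumber
\end{equation}
The hypotheses give $\eta\lambda\le \sqrt{v}/(5T^{5/4})<1$, so $0\le 1-\eta\lambda\le 1$. By Lemma \ref{lemma1}, and since adding $\epsilon>0$ only shrinks the ratio, we have $|m_{t,i}|/\sqrt{v_{t,i}+\epsilon}\le\sqrt{8}$. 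Hence in either case
\begin{equation}
|x_{t+1,i}|\le |x_{t,i}|+\sqrt{8}\,\eta .\nonumber
\end{equation}
This is cruder than needed, since we simply drop the contraction factor $(1-\eta\lambda)$. It is convenient because it is insensitive to which iterations were active.

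\textbf{Step 2 (unrolling).} Iterating Step 1 from $1$ to $t\le T$ and taking the maximum over $i$ gives
\begin{equation}
\|x_t\|_\infty\le \|x_1\|_\infty+\sqrt{8}\,\eta T .\nonumber
\end{equation}
Multiplying by $\lambda$ and inserting the two assumptions $\lambda\|x_1\|_\infty\le \sqrt{v}/(4T^{1/4})$ and $\eta\lambda T\le \sqrt{v}/(5T^{1/4})$ yields
\begin{equation}
\lambda\|x_t\|_\infty\le\Big(\tfrac14+\tfrac{\sqrt8}{5}\Big)\frac{\sqrt{v}}{T^{1/4}}\le \frac{\sqrt v}{T^{1/4}} ,\nonumber
\end{equation}
because $\tfrac14+\tfrac{2\sqrt2}{5}\approx 0.82<1$.

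\textbf{Main obstacle.} The only delicate point is justifying Lemma \ref{lemma1} in the sparse setting. The moments $m_{t,l},v_{t,l}$ of an inactive layer are frozen rather than decayed, so the ratio bound must hold along each layer's own subsequence of active iterations. I would argue that the proof of Lemma 2 in \cite{li2025frac} only uses the recursion between consecutive updates of the same coordinate. Restricted to the active times of layer $l$, this recursion is exactly dense AdamW, so the bound $m^2/v\le 8$ transfers verbatim under $\beta_2\le\sqrt{\beta_1}$.

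If one prefers to avoid that reindexing, a fallback is to keep the factor $(1-\eta\lambda)$. One would then derive the stationary bound $|x_{t,i}|\le\max\{|x_{1,i}|,\sqrt8/\lambda\}$. This is weaker, however, so the $T$-dependent argument above is the intended route.
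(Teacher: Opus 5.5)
Your proof is correct and follows essentially the route the paper delegates to Lemma~2 and Proof~1 of \cite{li2025frac}. Lemma~\ref{lemma1} caps each active step at $\sqrt{8}\,\eta$, frozen coordinates do not move, and $0\le 1-\eta\lambda\le 1$, so at most $T$ accumulated steps give $\lambda\|x_t\|_\infty\le(\tfrac14+\tfrac{\sqrt8}{5})\sqrt{v}/T^{1/4}$, which is exactly what the constants $\tfrac14$ and $\tfrac{1}{5T^{5/4}}$ in the hypotheses are designed for. Your explicit treatment of inactive layers, and of applying Lemma~\ref{lemma1} along each layer's own active subsequence, fills in what the paper leaves implicit.

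One caveat concerns Lemma~\ref{lemma1} itself, which you inherit unexamined. Its hypothesis $\beta_2\le\sqrt{\beta_1}$ only bounds $\beta_2$ from above, so it cannot by itself control $m^2/v$ (take $\beta_2=0$). A lower bound such as $\beta_1\le\beta_2$ is implicitly needed there. This is a defect in the paper's Lemma~\ref{lemma1}, not in your derivation of this lemma.
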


\begin{lemma}\label{lemma3}
If $f(x)$ in Algorithm 1 satisfies Assumptions \ref{assu1} and \ref{assu2}, then we have
\begin{equation}
    \mathbb{E}\|g_{t,i} - \nabla f_i(x_t)\|^2 \leq 12\rho^2 L^2 + 4\sigma_i^2, \quad \mathbb{E}\|g_{t,i}\|^2 \leq 12\rho^2 L^2 + 4\sigma_i^2 + 4\mathbb{E}\|\nabla f_i(x_t)\|^2. \nonumber
\end{equation}

\begin{proof}
\begin{align}
    &\mathbb{E}\|g_{t,i} - \nabla f_i(x_t)\|^2 \nonumber \\
    =& \mathbb{E} \|\nabla f_i(x_t+\rho \frac{\nabla f_i(x_t,\xi_t)}{\|\nabla f_i(x_t,\xi_t)\|}, \xi_t) - \nabla f_i(x_t+\rho \frac{\nabla f_i(x_t)}{\|\nabla f_i(x_t)\|}, \xi_t) + \nabla f_i(x_t+\rho \frac{\nabla f_i(x_t)}{\|\nabla f_i(x_t)\|}, \xi_t)  - \nabla f_i(x_t)\|^2 \nonumber \\
    \leq &2\rho^2 L^2 \mathbb{E}\|\frac{\nabla f_i(x_t,\xi_t)}{\|\nabla f_i(x_t,\xi_t)\|} - \frac{\nabla f_i(x_t)}{\|\nabla f_i(x_t)\|}\|^2 + 2\mathbb{E}\|\nabla f_i(x_t+\rho \frac{\nabla f_i(x_t)}{\|\nabla f_i(x_t)\|}, \xi_t) - \nabla f_i(x_t, \xi_t) + \nabla f_i(x_t, \xi_t) - \nabla f_i(x_t)\|^2 \nonumber \\
    \stackrel{(a)}{\leq} &8\rho^2 L^2 + 4\mathbb{E}\|\nabla f_i(x_t+\rho \frac{\nabla f_i(x_t)}{\|\nabla f_i(x_t)\|}, \xi_t) - \nabla f_i(x_t, \xi_t)\|^2 + 4\mathbb{E}\|\nabla f_i(x_t, \xi_t) - \nabla f_i(x_t)\|^2 \nonumber \\
    \stackrel{(b)}{\leq} & 12\rho^2 L^2 + 4\sigma_i^2, \nonumber
\end{align}
where (a) comes from Assumption \ref{assu1} and (b) comes from Assumptions \ref{assu1} and \ref{assu2}. The second inequality could be obtained directly in the same way.
\end{proof}
\end{lemma}

\begin{lemma}\label{lemma4}
If $f(x)$ in Algorithm 1 satisfies Assumptions 1 and 2, denote $\Tilde{v}_{t,i} = \beta_2 v_{t-1,i} + (1-\beta_2)(4|\nabla f_i(x_t)|^2 + 4\sigma_i^2 + 12\rho^2 L^2)$, denote $\|\sigma\|_1 = \sum_{i=1}^d \sigma_i$, then we have that
\begin{equation}
    \sum_{t=1}^T \sum_{i=1}^d \mathbb{E}\sqrt{\Tilde{v}_{t,i}+\epsilon} \leq 2\|\sigma\|_1 T + (4\rho L + \sqrt{\epsilon})d T + 8\sum_{t=1}^T \sum_{i=1}^d \mathbb{E}\frac{|\nabla f_i(x_t)|^2}{\sqrt{\Tilde{v}_{t,i} + \epsilon}}. \nonumber
\end{equation}
\end{lemma}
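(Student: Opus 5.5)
The plan is to prove the inequality coordinate by coordinate and then sum over $i \in [d]$ and $t \in [T]$. The key device is the elementary inequality $\sqrt{a} \le \frac{a}{\sqrt{a}}$, applied after splitting $\Tilde{v}_{t,i}$ into a ``noise'' part and a ``signal'' part.

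Fix $t$ and $i$. By the definition of $\Tilde{v}_{t,i}$ and subadditivity of the square root,
\begin{equation}
\sqrt{\Tilde{v}_{t,i}+\epsilon} \le \sqrt{\beta_2 v_{t-1,i} + (1-\beta_2)4|\nabla f_i(x_t)|^2 + \epsilon} + 2\sigma_i + 2\sqrt{3}\rho L. \nonumber
\end{equation}
Since $2\sqrt{3} < 4$, this already isolates the constant terms. Writing $A_{t,i} = \beta_2 v_{t-1,i} + 4(1-\beta_2)|\nabla f_i(x_t)|^2 + \epsilon$, we have
\begin{equation}
\sqrt{A_{t,i}} = \frac{A_{t,i}}{\sqrt{A_{t,i}}} \le \frac{A_{t,i}}{\sqrt{\Tilde{v}_{t,i}+\epsilon}} + \text{(correction)}. \nonumber
\end{equation}

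The cleaner route is to bound $\sqrt{A_{t,i}}$ directly:
\begin{equation}
\sqrt{A_{t,i}} \le \sqrt{\beta_2 v_{t-1,i}} + \sqrt{\epsilon} + \frac{4|\nabla f_i(x_t)|^2}{\sqrt{\Tilde{v}_{t,i}+\epsilon}}. \nonumber
\end{equation}
This follows from $\sqrt{a+b} \le \sqrt{a} + \frac{b}{\sqrt{a+b}}$ together with $A_{t,i} \le \Tilde{v}_{t,i}+\epsilon$. The gradient term therefore produces the $\frac{|\nabla f_i|^2}{\sqrt{\Tilde{v}+\epsilon}}$ summand with constant at most $8$, and the $\sqrt{\epsilon}$ term contributes $\sqrt{\epsilon}\, dT$.

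The remaining task is to control $\sum_t \mathbb{E}\sqrt{\beta_2 v_{t-1,i}}$. I would unroll $v_{t-1,i}$ as an EMA of $g_{s,i}^2$, only over the iterations where layer $l \ni i$ was active. Frozen steps leave $v$ unchanged, so it is still a convex combination with total weight at most $1$. Jensen's inequality then gives $\mathbb{E}\sqrt{v_{t-1,i}} \le \sqrt{\mathbb{E} v_{t-1,i}}$, and Lemma \ref{lemma3} gives
\begin{equation}
\mathbb{E} v_{t-1,i} \le 12\rho^2L^2 + 4\sigma_i^2 + 4\max_s \mathbb{E}|\nabla f_i(x_s)|^2. \nonumber
\end{equation}

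\textbf{Main obstacle.} This last gradient contribution is awkward because it is not naturally of the form $\frac{|\nabla f_i(x_t)|^2}{\sqrt{\Tilde v_{t,i}+\epsilon}}$. To handle it, I would instead absorb the $v_{t-1,i}$ term by the same self-normalizing trick, which requires setting up a recursion that sums the geometric weights $\beta_2^{t-s}(1-\beta_2)$ over $t$. Each $|\nabla f_i(x_s)|^2$ would then be charged to its own time $s$, with total weight at most $1$, so that the final constant ($8$) accounts for both the current-step and past-step gradient contributions. The noise parts of the past $g_{s,i}^2$ contribute another $2\sigma_i + 2\sqrt3\rho L$ in total, matching $2\|\sigma\|_1 T + 4\rho L\, dT$.

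The delicate point is that past terms should be divided by $\sqrt{\Tilde v_{s,i}+\epsilon}$ and not by the current denominator. I would try to justify this via the monotone relation $\Tilde v_{t,i} \ge \beta_2^{t-s}(1-\beta_2)\cdot(\cdot)$. If that fails, I would fall back on the crude bound by the maximal $\sqrt{v}$ under the condition $\beta_2 \le \sqrt{\beta_1}$, in the manner of Lemma \ref{lemma1}.
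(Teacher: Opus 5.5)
There is a genuine gap, and it sits exactly at the step you flag as the main obstacle. Two of the preparatory steps also go the wrong way.

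\textbf{Problems with the current steps.} First, your displayed bound $\sqrt{A_{t,i}}\le\sqrt{\beta_2 v_{t-1,i}}+\sqrt{\epsilon}+4|\nabla f_i(x_t)|^2/\sqrt{\Tilde{v}_{t,i}+\epsilon}$ is false. From $A_{t,i}\le\Tilde{v}_{t,i}+\epsilon$ you get $b/\sqrt{A_{t,i}}\ge b/\sqrt{\Tilde{v}_{t,i}+\epsilon}$, not $\le$. Concretely, at $t=1$ with $\epsilon\to0$, the left side is $2\sqrt{1-\beta_2}\,|\nabla f_i(x_1)|$, while the right side tends to $0$ as $\sigma_i\to\infty$. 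The self-normalizing step has to be applied to all of $\Tilde{v}_{t,i}+\epsilon$ before anything is split off.

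Second, and more seriously, you pull $2\sigma_i+2\sqrt3\rho L$ out of the root at the current step and then let the past $g_{s,i}^2$ contribute ``another'' such amount. That spends the noise budget twice, whereas the statement allows only $2\|\sigma\|_1T$. If the subadditive split is repeated at each level of the recursion you propose, the noise terms add up to $\sum_{j\ge0}\sqrt{(1-\beta_2)\beta_2^{j}}\,c_i=\frac{\sqrt{1-\beta_2}}{1-\sqrt{\beta_2}}\,c_i$ per step, with $c_i^2=4\sigma_i^2+12\rho^2L^2$. This is unbounded as $\beta_2\to1$.

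Third, the relation $\Tilde{v}_{t,i}\ge\beta_2^{t-s}(1-\beta_2)(\cdot)$ compares the current denominator only with a \emph{realized} $g_{s,i}^2$, never with $\Tilde{v}_{s,i}$. So it does not give you the past denominators you need. The fallback through Lemma~\ref{lemma1} is also unavailable: that lemma bounds $m^2/v$, not $v$, and no bounded-gradient assumption is made.

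\textbf{The missing idea.} Your instinct---self-normalize and charge each past gradient to its own time with geometric weights---is the right one. What is missing is how to make it work, and this is what the proof of Lemma 4 in \cite{li2025frac} supplies (the paper invokes it with Lemma~\ref{lemma3}'s constants). You keep the noise and $\epsilon$ inside a \emph{single} square root throughout. You then apply conditional Jensen to the realized $g_{r,i}^2$ inside that root, which rebuilds $\Tilde{v}_{r,i}$ at each earlier step.

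Concretely, set $D_r=\sqrt{\Tilde{v}_{r,i}+\epsilon}$ and $\kappa_i^2=c_i^2+\epsilon$. Then $\beta_2^{j}D_{t-j}^2+(1-\beta_2^{j})\kappa_i^2=X+Y$, where $X=\beta_2^{j+1}(v_{t-j-1,i}+\epsilon)+(1-\beta_2^{j+1})\kappa_i^2$ and $Y=4\beta_2^{j}(1-\beta_2)|\nabla f_i(x_{t-j})|^2$.

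Since $X+Y\ge\beta_2^{j}D_{t-j}^2$, the inequality $\sqrt{X+Y}\le\sqrt{X}+Y/\sqrt{X+Y}$ costs only $4(1-\beta_2)\beta_2^{j/2}|\nabla f_i(x_{t-j})|^2/D_{t-j}$. Jensen over $g_{t-j-1,i}$, using Lemma~\ref{lemma3} conditionally on the past, gives $\mathbb{E}\sqrt{X}\le\mathbb{E}\sqrt{\beta_2^{j+1}D_{t-j-1}^2+(1-\beta_2^{j+1})\kappa_i^2}$. Unrolling down to $v_{0,i}=0$ yields
\begin{equation}
\mathbb{E}D_t\le\sqrt{c_i^2+\epsilon}+4(1-\beta_2)\sum_{j=0}^{t-1}\beta_2^{j/2}\,\mathbb{E}\frac{|\nabla f_i(x_{t-j})|^2}{D_{t-j}}. \nonumber
\end{equation}
Summing over $t$, and using $4(1-\beta_2)/(1-\sqrt{\beta_2})\le8$ and $\sqrt{c_i^2+\epsilon}\le2\sigma_i+2\sqrt3\rho L+\sqrt{\epsilon}$, gives exactly the stated constants.

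One caveat: this recursion earns the factor $\sqrt{\beta_2}$ only on steps where $v_{\cdot,i}$ is actually updated. With frozen layers, the geometric decay counts active steps only, and your convex-combination remark does not control that. The paper does not address this either, since it defers to the dense AdamW argument.
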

\begin{proof}
    Taking the result in Lemma \ref{lemma3} into the proof of Lemma 4 in \cite{li2025frac}, then obtaining the result.
\end{proof}

\begin{lemma}\label{lemma5}
If $f(x)$ in Algorithm 1 satisfies Assumption 1, denote $\theta = \lceil \frac{1}{1-\beta_1^{5/4}} \rceil$, then we have that
\begin{align}
    \mathbb{E}\|m_{t,i} - \nabla f_i(x_t)\|^2 &\leq \sqrt{\beta_1}\mathbb{E}\|m_{t-1,i} - \nabla f_i(x_{t-1})\|^2 + \frac{\beta_1^{3/4}\eta^2 L^2}{(1-\beta_1^{1/4})\sqrt{\epsilon}} \mathbb{E}\frac{\|m_{t-1,i}+\lambda x_{t-1,i} \sqrt{v_{t-1,i}+\epsilon}\|^2}{\sqrt{v_{t-1,i}+\epsilon}} \nonumber \\
    &+ \theta (1-\beta_1)^2 (4\sigma_i^2 + 12\rho^2 L^2) \nonumber
\end{align}

\end{lemma}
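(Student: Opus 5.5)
We prove Lemma \ref{lemma5} by the standard momentum-error recursion for Adam-type methods. We work coordinatewise and use the AdamW update in Algorithm \ref{alg1}, $x_{t,i}-x_{t-1,i} = -\eta\bigl(\frac{m_{t-1,i}}{\sqrt{v_{t-1,i}+\epsilon}} + \lambda x_{t-1,i}\bigr)$. For a coordinate not in the active set this difference is zero, which only helps, so the inactive case is dominated by the active one.

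\textbf{Decomposition.} Since $m_{t,i}=\beta_1 m_{t-1,i}+(1-\beta_1)g_{t,i}$, we write
\begin{equation}
m_{t,i}-\nabla f_i(x_t) = \beta_1\bigl(m_{t-1,i}-\nabla f_i(x_{t-1})\bigr) + \beta_1\bigl(\nabla f_i(x_{t-1})-\nabla f_i(x_t)\bigr) + (1-\beta_1)\bigl(g_{t,i}-\nabla f_i(x_t)\bigr). \nonumber
\end{equation}
Call the three terms $A$, $B$, $C$. Rather than using the unbiasedness of $C$, which fails here because the SAM gradient is biased, we bound everything by weighted Young inequalities. This is why the constant $\theta$ appears.

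\textbf{Key steps.}
\begin{enumerate}
\item Bound $\|A+B\|^2$. Apply $(a+b)^2\le(1+\delta)a^2+(1+1/\delta)b^2$ with $1+\delta=\beta_1^{-1/2}$. This turns $\beta_1^2\|m_{t-1,i}-\nabla f_i(x_{t-1})\|^2$ into $\beta_1^{3/2}$ times that quantity, and the $B$ term picks up the factor $\beta_1^2/(1-\sqrt{\beta_1})$.
\item Handle the $B$ term. Assumption \ref{assu1} gives $\|B\|^2\le L^2\|x_t-x_{t-1}\|^2$. Then
\begin{equation}
\|x_t-x_{t-1}\|^2=\eta^2\frac{\|m_{t-1,i}+\lambda x_{t-1,i}\sqrt{v_{t-1,i}+\epsilon}\|^2}{v_{t-1,i}+\epsilon}\le \frac{\eta^2}{\sqrt{\epsilon}}\cdot\frac{\|m_{t-1,i}+\lambda x_{t-1,i}\sqrt{v_{t-1,i}+\epsilon}\|^2}{\sqrt{v_{t-1,i}+\epsilon}}, \nonumber
\end{equation}
using $\sqrt{v+\epsilon}\ge\sqrt{\epsilon}$. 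This produces the middle term of the lemma.
\item Add the noise term $C$. Use a second Young split between $(A+B)$ and $C$ with parameter $\beta_1^{-1/4}$. The coefficient on the first piece then rises from $\beta_1^{3/2}$ to at most $\sqrt{\beta_1}$. The constant on $C$ is of order $1/(1-\beta_1^{c})$ for a small power $c$, which we bound by $\theta=\lceil 1/(1-\beta_1^{5/4})\rceil$. Lemma \ref{lemma3} then gives $(1-\beta_1)^2\,\mathbb{E}\|C/(1-\beta_1)\|^2\le(1-\beta_1)^2(4\sigma_i^2+12\rho^2L^2)$.
\item Take expectations to conclude.
\end{enumerate}

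\textbf{Main obstacle.} The hard part is the exponent bookkeeping: choosing the two Young parameters so the prefactors come out exactly as $\sqrt{\beta_1}$, $\beta_1^{3/4}/(1-\beta_1^{1/4})$ and $\theta$, rather than something merely comparable. For example, the inequality $\beta_1^2/(1-\sqrt{\beta_1})\cdot\beta_1^{-1/4}$-type $\le\beta_1^{3/4}/(1-\beta_1^{1/4})$ needs $1-\sqrt{\beta_1}\ge 1-\beta_1^{1/4}$ type comparisons. If the constants do not match precisely, I would follow the splitting used in Lemma 5 of \cite{li2025frac}, replacing only their variance bound by Lemma \ref{lemma3}, since the structure is otherwise identical.
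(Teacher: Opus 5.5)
\textbf{There is a genuine gap: your step 3 does not yield $\theta$ on the noise term.} Your overall structure matches the paper's: the $A+B+C$ decomposition, two Young splits, the Lipschitz bound with $\sqrt{v_{t-1,i}+\epsilon}\ge\sqrt{\epsilon}$ for $B$, and Lemma~\ref{lemma3} for $C$. The problem is the weight you use when splitting $(A+B)$ against $C$.

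\begin{enumerate}
\item With $1+\delta=\beta_1^{-1/4}$, the factor on $\|C\|^2$ is $1+1/\delta=1/(1-\beta_1^{1/4})$. Since $\beta_1^{1/4}\ge\beta_1^{5/4}$, this is \emph{larger} than $1/(1-\beta_1^{5/4})$, not smaller. Your sentence about a ``small power $c$'' has the monotonicity backwards.
\item As $\beta_1\to 1$ the ratio of the two factors tends to $5$. This is exactly the theorem's regime, where $1-\sqrt{\beta_1}\propto T^{-1/2}$, so the ceiling in $\theta$ cannot absorb the discrepancy. At $\beta_1=0.9$ you get about $38.5$ against $\theta=9$.
\item Your coefficients on $A$ and $B$ are fine: $\beta_1^{5/4}\le\sqrt{\beta_1}$, and $\beta_1^{7/4}/(1-\sqrt{\beta_1})=\beta_1^{7/4}/((1-\beta_1^{1/4})(1+\beta_1^{1/4}))\le\beta_1^{3/4}/(1-\beta_1^{1/4})$.
\end{enumerate}
So what you actually prove is the lemma with $\theta$ replaced by $1/(1-\beta_1^{1/4})$.

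\textbf{The fix.} The order of the two splits does not matter; what matters is which split carries which weight. The paper puts the large weight on the noise split.
\begin{enumerate}
\item Split $A+B$ against $C$ with $1+\delta=\beta_1^{-5/4}$. This gives exactly $1/(1-\beta_1^{5/4})\le\theta$ on $C$ and turns $\beta_1^2$ into $\beta_1^{3/4}$.
\item Split $A$ against $B$ with $1+\delta=\beta_1^{-1/4}$. This gives $\sqrt{\beta_1}$ on $A$ and $\beta_1^{3/4}/(1-\beta_1^{1/4})$ on $B$.
\end{enumerate}
Up to the slack from the ceiling, these are the only weights that produce all three stated constants at once. Your step 2 then matches the paper exactly.

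\textbf{Secondary issue: inactive coordinates.} Your remark that the inactive case is dominated by the active one is not correct. For $i\notin S_t$ the AdamW step is skipped, so $m_{t,i}=m_{t-1,i}$. The error $m_{t-1,i}-\nabla f_i(x_{t-1})$ then enters with coefficient $1$ rather than $\beta_1$, and no $\sqrt{\beta_1}$ contraction is available. The vanishing of $x_{t,i}-x_{t-1,i}$ does not help, since it is governed by $S_{t-1}$, not $S_t$. The paper's proof does not address this either; it applies the full momentum recursion to every coordinate. This is therefore a shared weakness rather than a point where you diverge from the paper, but you should not claim it is handled.
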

\begin{proof}
From the update rule of the algorithm, we have that
\begin{equation}
    m_{t,i} - \nabla f_i(x_t) = \beta_1 (m_{t-1,i} - \nabla f_i(x_{t-1})) - \beta_1 (\nabla f(x_t) - \nabla f(x_{t-1})) + (1-\beta_1) (g_{t,i} - \nabla f_i(x_t)) \nonumber
\end{equation}
Further, we have that
\begin{align}
&\mathbb{E}\|m_{t,i} - \nabla f_i(x_t)\|^2 \nonumber \\
\stackrel{(a)}{\leq} & \beta_1^{-5/4}\mathbb{E}\|\beta_1 (m_{t-1,i} - \nabla f_i(x_{t-1})) - \beta_1 (\nabla f_i(x_t) - \nabla f_i(x_{t-1}))\|^2 + \theta (1-\beta_1)^2 (4\sigma_i^2 + 12\rho^2 L^2) \nonumber \\
\stackrel{(b)}{\leq}& \beta_1^{3/4}(\beta_1^{-1/4}\mathbb{E}\|m_{t-1,i} - \nabla f_i(x_{t-1})\|^2 + \frac{1}{1-\beta_1^{1/4}}\mathbb{E}\|\nabla f_i(x_t) - \nabla f_i(x_{t-1})\|^2) + \theta (1-\beta_1)^2 (4\sigma_i^2 + 12\rho^2 L^2) \nonumber \\
\leq& \sqrt{\beta_1}\mathbb{E}\|m_{t-1,i} - \nabla f_i(x_{t-1})\|^2 + \frac{\beta_1^{3/4}\eta^2 L^2}{(1-\beta_1^{1/4})\sqrt{\epsilon}} \mathbb{E}\frac{\|m_{t-1,i}+\lambda x_{t-1,i} \sqrt{v_{t-1,i}+\epsilon}\|^2}{\sqrt{v_{t-1,i}+\epsilon}} + \theta (1-\beta_1)^2 (4\sigma_i^2 + 12\rho^2 L^2), \nonumber
\end{align}
where (a) and (b) come from Young's Inequality and Lemma \ref{lemma3}.
\end{proof}

\myparagraph{Proof of Theorem 1}
\begin{proof}
From the update rule of the algorithm and the $L$-smooth, we have that
\begin{align}\label{theo1}
    &f(x_{t+1}) - f(x_t) \nonumber \\
    \leq& \sum_{l \in S_t} \langle \nabla f_l(x_t), x_{t+1,l} - x_{t,l} \rangle + \frac{L}{2}\sum_{l \in S_t}\|x_{t+1,l} - x_{t,l}\|^2 \nonumber \\
    =& -\eta \sum_{l \in S_t} \langle \nabla f_l(x_t), \frac{m_{t,l}+\lambda x_{t,l}\sqrt{v_{t,l}+\epsilon}}{\sqrt{v_{t,l}+\epsilon}} \rangle + \frac{\eta^2 L}{2}\sum_{l \in S_t}\|\frac{m_{t,l}+\lambda x_{t,l}\sqrt{v_{t,l}+\epsilon}}{\sqrt{v_{t,l}+\epsilon}}\|^2 \nonumber \\
    \leq & -\frac{\eta}{2} \sum_{l \in S_t}\|\frac{\nabla f_l(x_t)}{(v_{t,l}+\epsilon)^{1/4}}\|^2 - \frac{\eta}{2}\sum_{l \in S_t}\|\frac{m_{t,l}+\lambda x_{t,l} \sqrt{v_{t,l}+\epsilon}}{(v_{t,l}+\epsilon)^{1/4}}\|^2 + \eta \sum_{l \in S_t} \|\frac{\nabla f_l(x_t) - m_{t,l}}{(v_{t,l}+\epsilon)^{1/4}}\|^2 + \eta \sum_{l \in S_t} \|\frac{\lambda x_{t,l} \sqrt{v_{t,l}+\epsilon}}{(v_{t,l}+\epsilon)^{1/4}}\|^2\nonumber \\
    &+ \frac{\eta^2 L}{2\sqrt{\epsilon}}\sum_{l \in S_t}\|\frac{m_{t,l} + \lambda x_{t,l} \sqrt{v_{t,l}+\epsilon}}{(v_{t,l}+\epsilon)^{1/4}}\|^2 \nonumber \\
    \leq & -\frac{\eta}{2} \sum_{l \in S_t}\|\frac{\nabla f_l(x_t)}{(v_{t,l}+\epsilon)^{1/4}}\|^2 - \frac{\eta}{4}\sum_{l \in S_t}\|\frac{m_{t,l}+\lambda x_{t,l} \sqrt{v_{t,l}+\epsilon}}{(v_{t,l}+\epsilon)^{1/4}}\|^2 + \eta \sum_{l \in S_t} \|\frac{\nabla f_l(x_t) - m_{t,l}}{\epsilon^{1/4}}\|^2 + \eta \sum_{l \in S_t} \|\frac{\lambda x_{t,l} \sqrt{v_{t,l}+\epsilon}}{(v_{t,l}+\epsilon)^{1/4}}\|^2
\end{align}
where the last inequality comes from $\eta \leq \frac{\sqrt{\epsilon}}{2L}$. Taking Lemma \ref{lemma2} into (\ref{theo1}) yields
\begin{align}\label{theo2}
    f(x_{t+1}) - f(x_t) \leq &-\frac{\eta}{2} \sum_{l \in S_t}\|\frac{\nabla f_l(x_t)}{(v_{t,l}+\epsilon)^{1/4}}\|^2 - \frac{\eta}{4}\sum_{l \in S_t}\|\frac{m_{t,l}+\lambda x_{t,l} \sqrt{v_{t,l}+\epsilon}}{(v_{t,l}+\epsilon)^{1/4}}\|^2 \nonumber \\
    &+ \eta \sum_{l \in S_t} \|\frac{\nabla f_l(x_t) - m_{t,l}}{\epsilon^{1/4}}\|^2 + \frac{\eta v}{\sqrt{T}}\sum_{l \in S_t} \|(v_{t,l}+\epsilon)^{1/4}\|^2
\end{align}
For the terms in the right hand of (\ref{theo2}), taking the expectation on the randomness in the layer sampling, we have that
\begin{align}\label{theo3}
    -\frac{\eta}{2} \mathbb{E}\sum_{l \in S_t}\|\frac{\nabla f_l(x_t)}{(v_{t,l}+\epsilon)^{1/4}}\|^2 \leq -\frac{\eta p_{min}}{2} \mathbb{E}\sum_{l \in S_t} \frac{1}{p_{t,l}}\|\frac{\nabla f_l(x_t)}{(v_{t,l}+\epsilon)^{1/4}}\|^2 = -\frac{\eta p_{min}}{2} \mathbb{E}\sum_{i=1}^d \frac{|\nabla f_i(x_t)|^2}{\sqrt{v_{t,i}+\epsilon}} 
\end{align}
Similarly, for other terms we have
\begin{align}\label{theo4}
&-\frac{\eta}{4}\mathbb{E}\sum_{l \in S_t}\|\frac{m_{t,l}+\lambda x_{t,l}\sqrt{v_{t,l}+\epsilon}}{(v_{t,l}+\epsilon)^{1/4}}\|^2 +\eta \mathbb{E} \sum_{l \in S_t} \|\frac{\nabla f_l(x_t) - m_{t,l}}{\epsilon^{1/4}}\|^2 + \frac{\eta v}{\sqrt{T}}\mathbb{E}\sum_{l \in S_t} \|(v_{t,l}+\epsilon)^{1/4}\|^2
 \nonumber \\
\leq & -\frac{\eta p_{min}}{4}\mathbb{E}\sum_{l \in S_t} \frac{1}{p_{t,l}}\|\frac{m_{t,l}+\lambda x_{t,l} \sqrt{v_{t,l}+\epsilon}}{(v_{t,l}+\epsilon)^{1/4}}\|^2 + \eta \mathbb{E} \sum_{l \in S_t} \frac{1}{p_{t,l}}\|\frac{\nabla f_l(x_t) - m_{t,l}}{\epsilon^{1/4}}\|^2 + \frac{\eta v}{\sqrt{T}}\mathbb{E}\sum_{l \in S_t} \frac{1}{p_{t,l}}\|(v_{t,l}+\epsilon)^{1/4}\|^2 \nonumber \\
=& -\frac{\eta p_{min}}{4}\mathbb{E}\sum_{i=1}^d \frac{|m_{t,i}+\lambda x_{t,i} \sqrt{v_{t,i}+\epsilon}|^2}{\sqrt{v_{t,i}+\epsilon}} + \eta \mathbb{E}\sum_{i=1}^d \frac{|f_i(x_t) - m_{t,i}|^2}{\sqrt{\epsilon}} + \frac{\eta v}{\sqrt{T}}\mathbb{E}\sum_{i=1}^d \sqrt{v_{t,i}+\epsilon} 
\end{align}
Substituting (\ref{theo3}) and (\ref{theo4}) into (\ref{theo2}) yields that 
\begin{align}\label{theo5}
    &\mathbb{E}[f(x_{t+1})]-f^* + \frac{\eta p_{min}}{4}\mathbb{E}\sum_{i=1}^d \frac{|m_{t,i}+\lambda x_{t,i} \sqrt{v_{t,i}+\epsilon}|^2}{\sqrt{v_{t,i}+\epsilon}} - \frac{\eta}{\sqrt{\epsilon}} \mathbb{E}\sum_{i=1}^d |f_i(x_t) - m_{t,i}|^2 \nonumber \\
    \leq & \mathbb{E}[f(x_t)]-f^* -\frac{\eta p_{min}}{2} \mathbb{E}\sum_{i=1}^d \frac{|\nabla f_i(x_t)|^2}{\sqrt{v_{t,i}+\epsilon}} + \frac{\eta v}{\sqrt{T}}\mathbb{E}\sum_{i=1}^d \sqrt{v_{t,i}+\epsilon}
\end{align}
Multiplying $\frac{\eta}{\sqrt{\epsilon}(1-\sqrt{\beta_1})}$ on both sides of Lemma \ref{lemma5}, then substituting the result into (\ref{theo5}) yields that
\begin{align}\label{theo6}
    &\mathbb{E}[f(x_{t+1})]-f^* + \frac{\eta p_{min}}{4}\mathbb{E}\sum_{i=1}^d \frac{|m_{t,i}+\lambda x_{t,i} \sqrt{v_{t,i}+\epsilon}|^2}{\sqrt{v_{t,i}+\epsilon}} + \frac{\eta \sqrt{\beta_1}}{(1-\sqrt{\beta_1})\sqrt{\epsilon}} \mathbb{E}\sum_{i=1}^d |f_i(x_t) - m_{t,i}|^2 \nonumber \\
    \leq & \mathbb{E}[f(x_t)]-f^* +  \frac{\eta p_{min}}{4}\mathbb{E}\sum_{i=1}^d \frac{|m_{t-1,i}+\lambda x_{t-1,i} \sqrt{v_{t-1,i}+\epsilon}|^2}{\sqrt{v_{t-1,i}+\epsilon}} + \frac{\eta \sqrt{\beta_1}}{(1-\sqrt{\beta_1})\sqrt{\epsilon}} \mathbb{E}\sum_{i=1}^d |f_i(x_{t-1}) - m_{t-1,i}|^2 \nonumber \\
    &-\frac{\eta p_{min}}{2} \mathbb{E}\sum_{i=1}^d \frac{|\nabla f_i(x_t)|^2}{\sqrt{v_{t,i}+\epsilon}} + \frac{\eta v}{\sqrt{T}}\mathbb{E}\sum_{i=1}^d \sqrt{v_{t,i}+\epsilon} + \frac{2\eta \theta(1-\beta_1)}{\sqrt{\epsilon}}(4\sigma_s^2+12d\rho^2 L^2),
\end{align}
where we use $\eta^2 \leq \frac{(1-\beta_1^{1/4})(1-\sqrt{\beta_1})}{4\beta_1^{3/4} L^2}$ in the last inequality. The inequality (\ref{theo6}) has been already in the similar format to the inequality (7) in ``Proof 1" of \cite{li2025frac}, except some terms scaled by the constant, but does not affect the convergence rate. The rest of the proof completely follows that proof, with the help of the above lemmas. The only difference is that we use $1-\sqrt{\beta_1}$ to replace $1-\theta$ in inequality (14) of \cite{li2025frac} to yield the value of $\beta_1$. 

Finally, we obtain that, denote $\hat{\sigma}^2 = \max\{4\sigma_s^2+12d\rho^2 L^2, \frac{L(f(x_1)-f^*)}{T\gamma^2}\}$ where $\gamma \in (0,1]$. Setting the coefficients as $1-\sqrt{\beta_1} = \sqrt{\frac{L(f(x_1)-f^*)}{\Hat{\sigma}^2 T}}$, $\beta_2 \leq \sqrt{\beta_1}$, $\epsilon = \frac{\Hat{\sigma}^2}{d}$, $\eta=\mathcal{O}(\sqrt{\frac{1}{dT}})$, $\lambda = \mathcal{O}(\frac{\sqrt{d}}{(T^3 \Hat{\sigma}^2)^{1/4}})$, initializing the model as $\|x_1\|_\infty = \mathcal{O}(\frac{T}{d})$, we could obtain the final convergence rate 
\begin{equation}
        \frac{1}{T}\sum_{t=1}^T \mathbb{E}\|\nabla f(x_t)\|_1 = \mathcal{O}\bigg(\frac{\sqrt{d}}{T^{1/4}} + \sqrt{\frac{d}{T}}\bigg).
    \end{equation}
\end{proof}

\end{document}